\newcommand{\ud}{\,\mathrm{d}}
\newcommand{\R}{\mathbb{R}}
\newcommand{\mean}[1]{\mbox{avg}({#1})}
\newcommand{\comment}[1]{ }
\begin{document}
\pagestyle{headings}

\mainmatter
\def\ECCV16SubNumber{1485}  

\title{Coarse-to-Fine Segmentation With Shape-Tailored Scale Spaces}




\author{Ganesh Sundaramoorthi$^{1}$, Naeemullah Khan$^{1}$, Byung-Woo Hong$^{2}$}
\institute{$^{1}$KAUST, Saudi Arabia \quad $^2$Chung-Ang University, Korea}

\maketitle

\begin{abstract}
  We formulate a general energy and method for segmentation that is
  designed to have preference for segmenting the coarse structure over
  the fine structure of the data, without smoothing across boundaries
  of regions. The energy is formulated by considering data terms at a
  continuum of scales from the scale space computed from the Heat
  Equation within regions, and integrating these terms over all
  time. We show that the energy may be approximately optimized without
  solving for the entire scale space, but rather solving
  time-independent linear equations at the native scale of the image,
  making the method computationally feasible. We provide a
  multi-region scheme, and apply our method to motion
  segmentation. Experiments on a benchmark dataset shows that our
  method is less sensitive to clutter or other undesirable fine-scale
  structure, and leads to better performance in motion segmentation.

  \keywords{Scale space, Heat Equation, segmentation, multi-scale
    methods, continuous optimization, motion segmentation}

\end{abstract}

\begin{figure}
  \centering
  \vspace{-0.75cm}
  {\footnotesize Without Coarse-to-Fine Approach}\\
  \includegraphics[width=.15\linewidth,height=.1\linewidth]{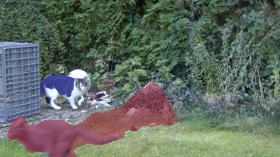}
  \includegraphics[width=.15\linewidth,height=.1\linewidth]{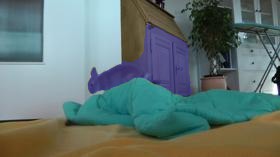}
  \includegraphics[width=.15\linewidth,height=.1\linewidth]{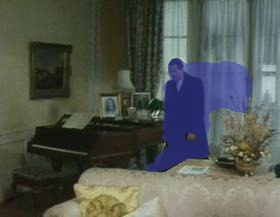}
  \includegraphics[width=.15\linewidth,height=.1\linewidth]{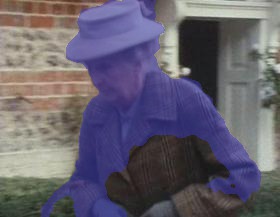}
  \includegraphics[width=.15\linewidth,height=.1\linewidth]{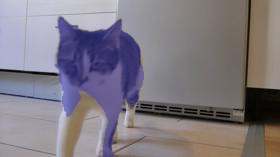}
  \includegraphics[width=.15\linewidth,height=.1\linewidth]{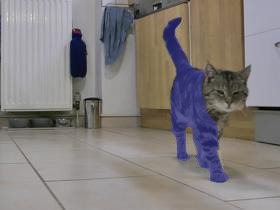}\\
  {\footnotesize With Our Coarse-to-Fine Approach}\\
  \includegraphics[width=.15\linewidth,height=.1\linewidth]{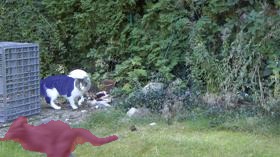}
  \includegraphics[width=.15\linewidth,height=.1\linewidth]{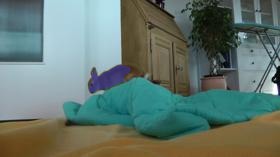}
  \includegraphics[width=.15\linewidth,height=.1\linewidth]{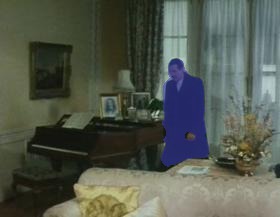}
  \includegraphics[width=.15\linewidth,height=.1\linewidth]{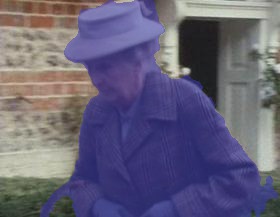}
  \includegraphics[width=.15\linewidth,height=.1\linewidth]{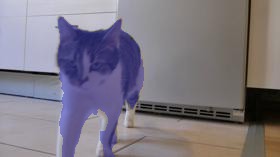}
  \includegraphics[width=.15\linewidth,height=.1\linewidth]{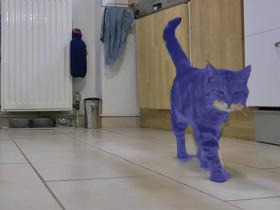}\\
  \caption{Our novel reformulation of existing data terms in
    segmentation based on scale spaces defined within regions of the
    segmentation, and integrated over a continuum of scales, leads to
    a scheme that exhibits a coarse-to-fine property.  This leads to
    less sensitivity of the method to clutter while retaining boundary
    accuracy.}
  \label{fig:motivation}
  \vspace{-0.75cm}
\end{figure}

\section{Introduction}

Segmentation of images and videos using low-level cues plays a key
role in computer vision. An image consists of many different
structures at different \emph{scales}, and thus the notion of
\emph{scale space} \cite{koenderink1984structure}, which consists of
blurs of the image at all degrees, has been central to computer
vision.  The need for incorporating scale space in segmentation is
well-recognized \cite{perona1990scale}. Further, there is evidence
from human visual studies (e.g., \cite{hegde2008time,neri2011coarse})
that the coarse scale, i.e., from high levels of blurring, is
predominantly processed before the fine scale. This
\emph{coarse-to-fine} principle has led to many efficient algorithms
that are able to capture the coarse structure of the solution, which
is often most important in computer vision. Therefore, it is natural
for segmentation algorithms to use scale space and operate in a
coarse-to-fine fashion.

Existing methods for segmentation that incorporate scale have either
one of the following limitations. First, most segmentation methods
(e.g.,
\cite{bresson2006multiscale,kokkinos2009texture,maire2013progressive,arbelaez2014multiscale})
based on scale spaces consider scale spaces that are globally computed
on the whole image, which does not capture the fact that there exist
multiple \emph{regions} of the segmentation at different scales, and
this could lead to the removal and/or displacement of important
structures in the image, for instance, when large structures are
blurred across small ones, leading to an inaccurate segmentation. Second,
algorithms that use a coarse-to-fine principle (e.g.,
\cite{blake1987visual,mobahi2015coarse}) do so \emph{sequentially}
(see Figure~\ref{fig:parallel_ctf}) so that the algorithm operates at
the coarser scale and then uses the result to initialize computation at a
finer scale. While this warm start may influence the finer scale
result, there is no guarantee that the coarse structure of the
segmentation is preserved in the final solution.

{\bf Contributions}: In this paper, we develop an algorithm that
simultaneously addresses these two issues. {\bf 1.} Specifically, we
formulate a novel multi-label energy for segmentation that integrates
over a \emph{continuum} scales of scale spaces that are defined within
regions of the segmentation, referred to as a \emph{Shape-Tailored
  Scale Space}, thus preventing removal or displacement of important
structures. By integrating over a continuum of scales of the scale
space determined by the Heat Equation, we show that this energy has
preference to coarse structure of the data without ignoring the fine
structure. {\bf 2.}  Further, we show that the optimization of the
energy operates in a \emph{parallel} coarse-to-fine fashion (see
Figure~\ref{fig:parallel_ctf}). In particular, it considers a
continuum of scales together, and it is initially dominated by the
coarse structure of the data, then moves to segment finer structure of
the data, while preserving the structure obtained from the coarse
scale of the data. {\bf 3.} We apply our algorithm to the problem of
segmenting objects in video by motion, and show that we improve an
existing method on a benchmark dataset by merely changing the data
term in the energy to incorporate our ideas.

\def\fHeight{0.8in}
\begin{figure}[tb]
  \centering
  {\footnotesize Sequential Coarse-to-Fine}\\
  \includegraphics[clip,trim=0 10 10
  0,totalheight=\fHeight]{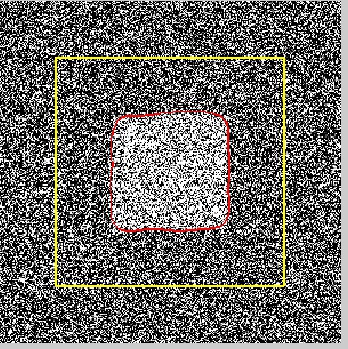}
  \includegraphics[clip,trim=0 10 10
  0,totalheight=\fHeight]{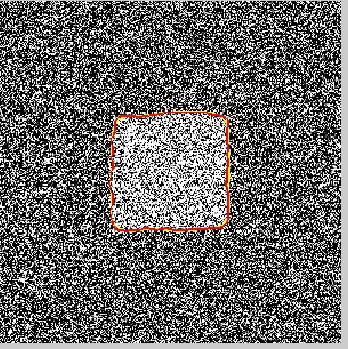}
  \includegraphics[clip,trim=0 10 10
  0,totalheight=\fHeight]{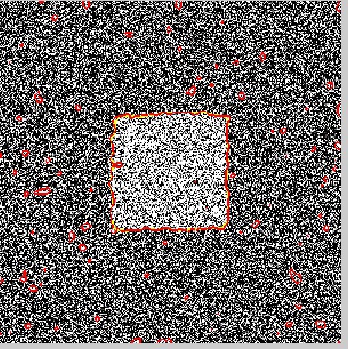}
  \includegraphics[clip,trim=0 10 10
  0,totalheight=\fHeight]{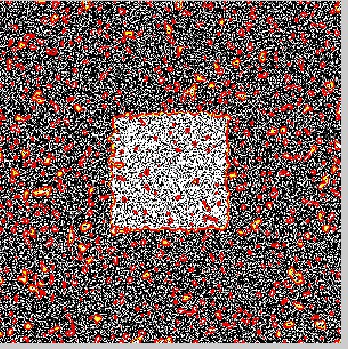}
  \includegraphics[clip,trim=0 10 10
  0,totalheight=\fHeight]{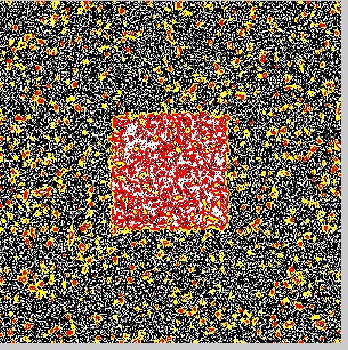}
  \\
%
  {\footnotesize Parallel Coarse-to-Fine (Ours)}\\
  \includegraphics[clip,trim=125 50 110 40,totalheight=\fHeight]{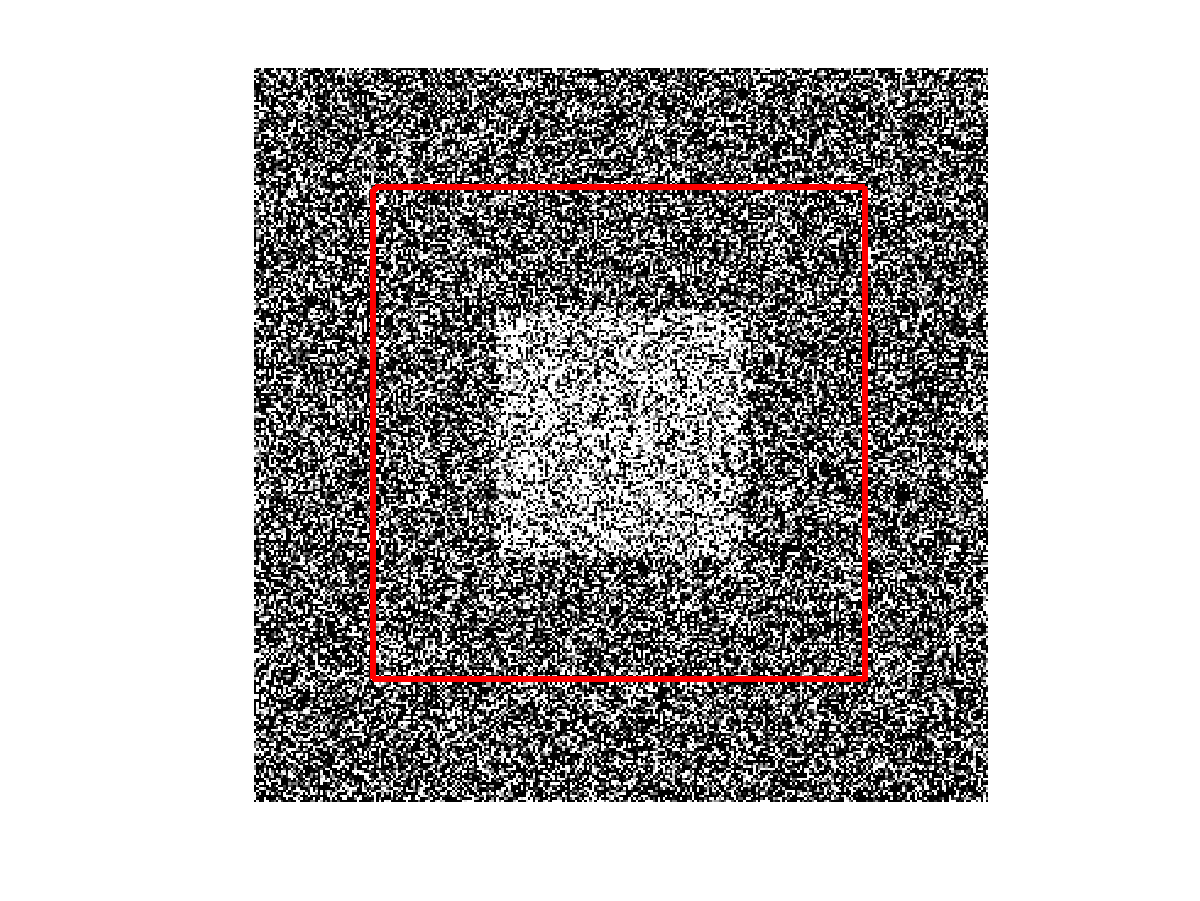}
  \includegraphics[clip,trim=125 50 110 40,totalheight=\fHeight]{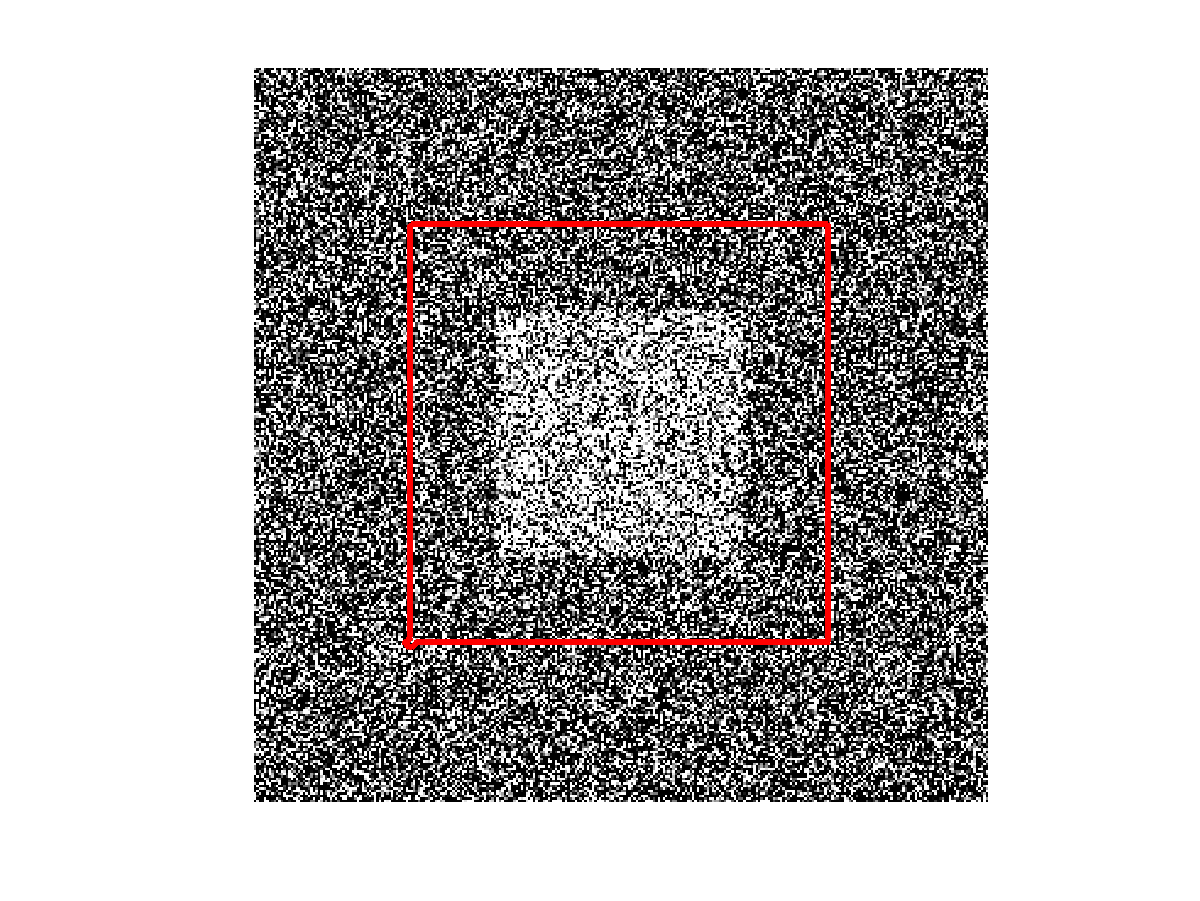}
  \includegraphics[clip,trim=125 50 110
  40,totalheight=\fHeight]{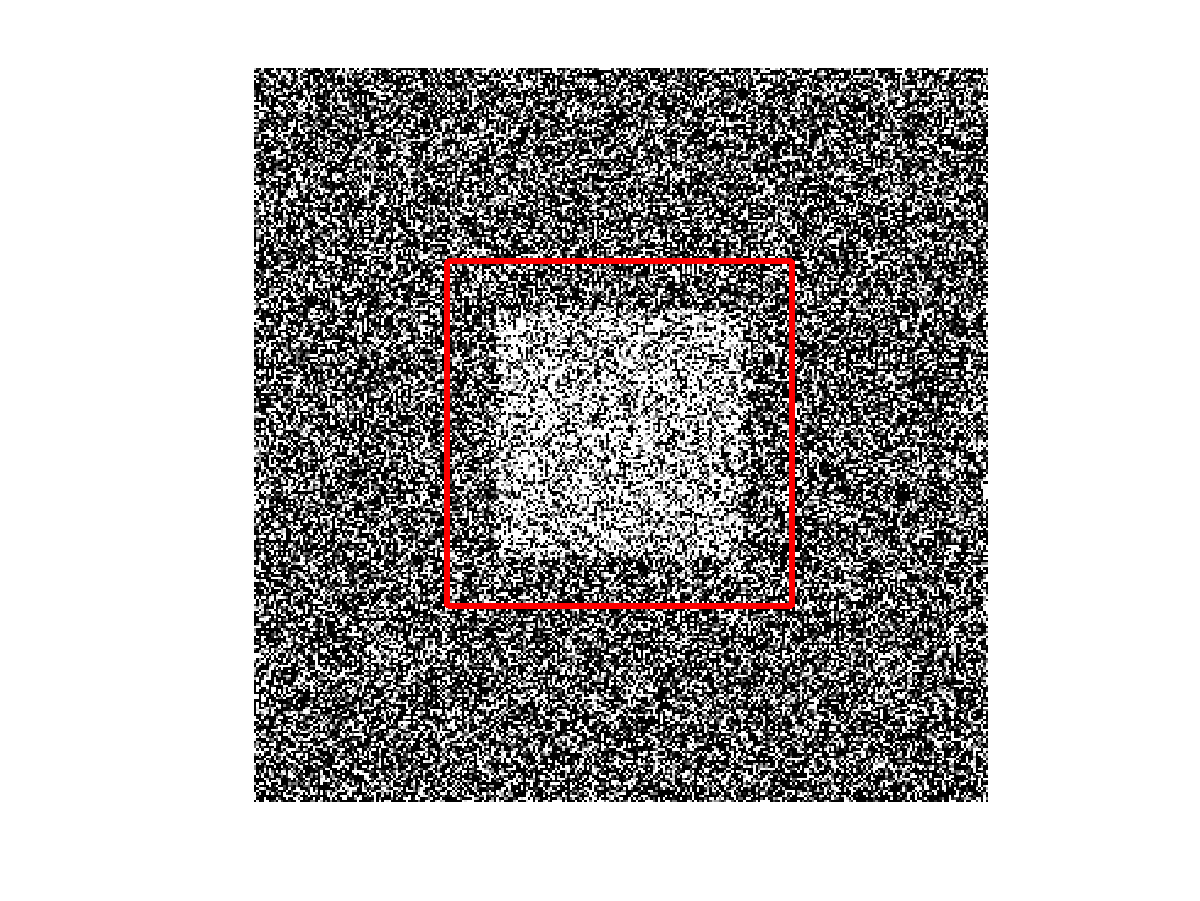}
  \includegraphics[clip,trim=125 50 110
  40,totalheight=\fHeight]{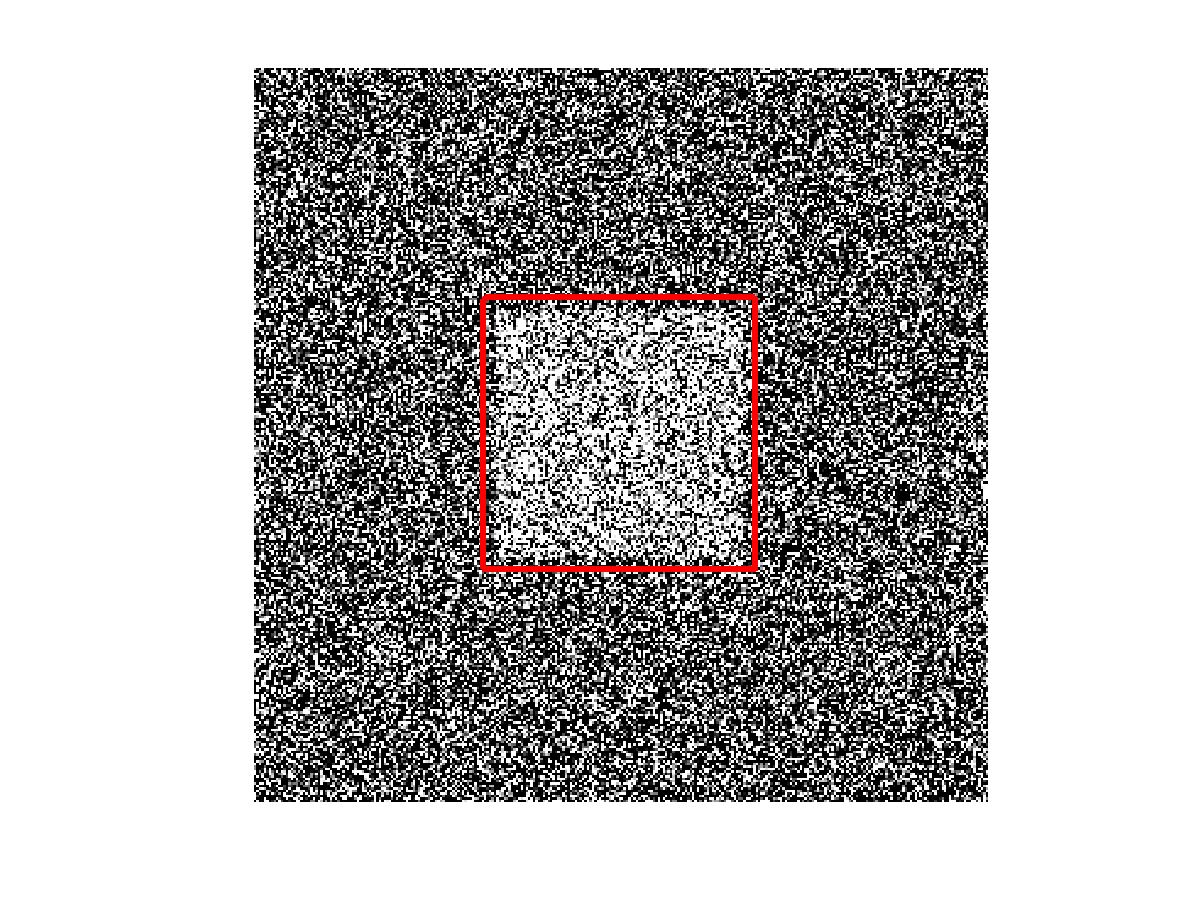}
  \includegraphics[clip,trim=125 50 110
  40,totalheight=\fHeight]{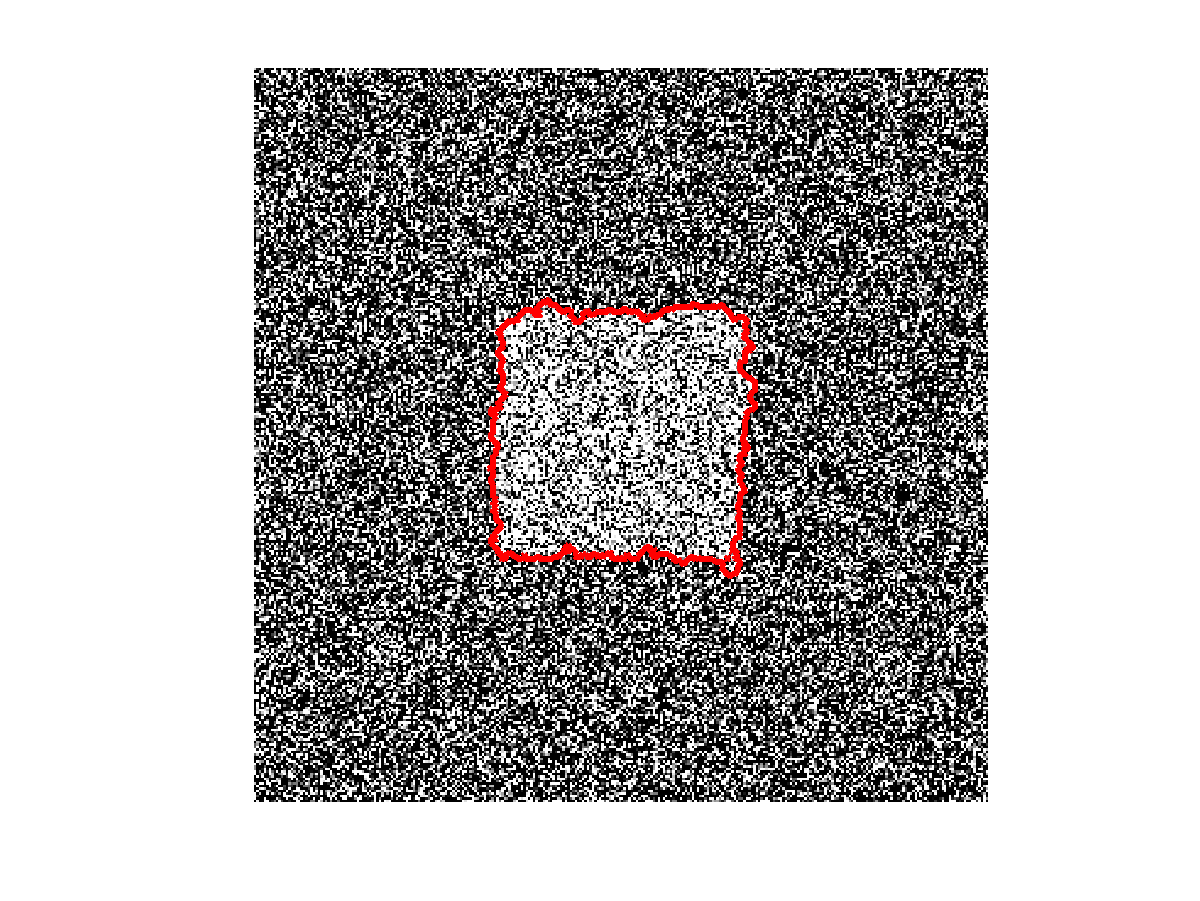}
  \caption{[Top]: Sequential coarse-to-fine methods use the result of
    segmentation (red) from the coarse scale to initialize (yellow)
    the finer scales, and may lose coarse structure of the coarse
    segmentation solution without additional heuristics. Note that the
    result of segmentation of the coarse scale is the left image in
    red (the blurred image is not shown), and towards the right
    segmentation is done at finer scales. [Bottom]: Our parallel
    coarse-to-fine approach considers a continuum of scales all at
    once and has a coarse-to-fine property. The evolution is shown
    from left to right.}
  \label{fig:parallel_ctf}
\end{figure}

\subsection{Related Work}

Scale space theory
\cite{koenderink1984structure,witkin1984scale,geusebroek2000color,koutaki2014scale}
has a long and rich history as a theory for analyzing images, and we
only provide brief highlights. The basic idea is that an image
consists of structures at various different scales (e.g., a leaf of a
tree exists at a different scale than a forest), and thus to analyze
an image without a-priori knowledge, it is necessary to consider the
image at \emph{all} scales. This is accomplished by blurring the image
at a continuum of kernel sizes. The most common kernel is a Gaussian,
which is known to be the only scale space satisfying certain axioms
such as not introducing any new features as the image is blurred
\cite{lindeberg1990scale}. Scale space has been used to analyze
structures in images (e.g.,
\cite{florack2000topological,van1994morphological,lindeberg1990scale,sironi2014multiscale}). This
has had wide ranging applications in stereo and optical
flow \cite{lucas1981iterative}, reconstruction
\cite{hummel1989reconstructions,ummenhofer2015global}, key -point
detection in wide-baseline matching \cite{lowe2004distinctive}, design
of descriptors for matching \cite{hassner2012sifts}, shape matching
\cite{bronstein2010scale}, and curve evolution
\cite{sapiro1993affine}, among others.

Gaussian scale spaces have also been used in image segmentation, most
notably in texture segmentation
\cite{galun2003texture,bresson2006multiscale,kokkinos2009texture},
which occur frequently in general images
\cite{maire2013progressive,arbelaez2014multiscale}, where the need for
scale information is cogent.  While these methods naturally capture
important scale information, they use a \emph{global} scale space
defined on the entire image, which blurs across segmentation
boundaries. Anisotropic scale spaces
\cite{perona1990scale,aujol2006structure} have been applied to reduce
blurring across boundaries, but this could blur across regions where
edges are not salient. Recently, \cite{khan2015shape} have addressed
this issue by using discrete scales computed locally within the
evolving regions of the segmentation. However, only a discrete number
of scales are used and the method does not exhibit a coarse-to-fine
property, which is the focus of this work. Such methods for
segmentation have been numerically implemented with various
optimization methods, including level sets \cite{osher1988fronts}, and
more recently convex optimization methods
\cite{pock2009algorithm}. The energy we consider is not convex, and
thus we rely on gradient descent on curves. The energy we consider
involves optimization with partial differential equation (PDE)
constraints, and thus we apply optimization techniques from
\cite{aubert2003image,delfour2011shapes}.

Coarse-to-fine methods, where coarse representations of the image or
objective function are processed and then finer aspects of the data
are successively revealed, have a long history in computer vision. One
such work is \cite{blake1987visual}. In these methods, data or the
objective function is smoothed, and the smoothed problem is
solved. The result is used to initialize the problem with less
smoothing, where finer details of the data are revealed. The hope is
that this result retains aspects of coarse solution, while gradually
finding finer detail. However, without additional heuristics such as
restricting the finer solution to be around the solution of the coarse
problem, there is no guarantee that coarse structure is preserved when
solving the finer problem. Recently, \cite{mobahi2015coarse} provided
analysis and derived closed form solutions for the smoothing of the
objective in problems of point cloud matching. Our method uses a
single energy integrating over a \emph{continuum} of scales in
parallel, and we optimize this energy directly. The optimization is
dominated by coarse aspects of the data without ignoring fine aspects
initially. Then fine aspects become more prominent, but the coarse
structure is preserved without any heuristics.

\comment{
Using an energy that incorporates
    a discrete number of scales \cite{khan2015shape} (middle) in this
    example captures the desired segmentation, however it does not
    exhibit coarse-to-fine behavior.
}

Since we apply our method to the problem of segmenting moving objects
in video based on motion, we highlight some aspects of that literature
most relevant to this work. Methods for motion segmentation are based
on optical flow (e.g., \cite{sun2010secrets}). Piecewise parametric
models for motion of regions in segmentation are used in e.g., 
\cite{wang1994representing,cremers2005motion}. Non-parametric warps
are used for motion models
(e.g.,\cite{ochs2014segmentation,sun2013fully,yang2015self}). Our goal
here is \emph{not} to estimate motion, but rather we use existing
techniques for motion estimation, and improve the segmentation of
regions by replacing the data term with our novel energy.

\comment{

We build on
the framework of \cite{yang2015self}, which is a state-of-the-art
method.
\begin{itemize}

\item scale space theory - \cite{koenderink1984structure},
  \cite{lindeberg1990scale}, \cite{geusebroek2000color},
  \cite{koutaki2014scale}, analysis of image - \cite{sironi2014multiscale}

\item scale space used in optical flow, stereo, reconstruction, CNNs;
  matching SIFT; scaleless sift - \cite{hassner2012sifts} ;
  \cite{bronstein2010scale} - heat kernels signatures for shape
  matching; local shape descriptor; curvature scale space for matching
  \cite{jiang2011multiscale}, sapiro - affine invariant curve
  evolution

\item scale spaces in segmentation: \cite{bresson2006multiscale},
  \cite{kokkinos2009texture}: blur across boundaries;
  \cite{galun2003texture} - multigrid scheme; \cite{khan2015shape} -
  no coarse-to-fine property;

\item optimization for segmentation - convex - \cite{pock2009algorithm}, level sets

\item pyramid in segmenation: \cite{arbelaez2014multiscale} -
  pyramid to reduce cost smooth across boundaries, but alginment; we
  provide a general energy; \cite{maire2013progressive} - multigrid
  blur across boundaries

\item \cite{perona1990scale} - anisotropic to avoid blurring across
  boundaries; ROF TV scale space; \cite{khan2015shape} - avoid blurring

\item \cite{mobahi2015coarse,blake1987visual} - continuation method:
  smooth energy, minimize smoothed energy, use result as
  initialization to less smoothed energy; we provide a single energy
  and minimization that achieves this effect; optical flow coarse to
  fine scheme

\item \cite{mobahi2012seeing} - blurring for alignment; se do
  segmenation not motion estimation

\item \cite{Jain_2013_ICCV} - coarse to fine semantic video
  segmentation; get same result as fine scale, but faster

\item motion segmentation - \cite{cremers2005motion},
  \cite{ochs2014segmentation},, \cite{sun2013fully}, \cite{yang2015self}

\end{itemize}

\begin{itemize}

\item many existing segmentation algorithms operate on the finest
  scale (Chan-Vese, graph cuts, convex approaches, etc); and as such,
  are sensitive to the finest level of detail in the image that may
  not be as relevant in capturing gross properties of the image

\item often geometric regularizers are employed, but this often leads to other
  problems such as restricting fine-scale structures; importance of
  multiscale cues in image are well understood for segmentation

\item we would not only like to integrate multiscale cues, but would
  like a method that has preference to segmenting the coarse structure
  of the image (not initially reacting the fine scale), but when the
  coarse struture of data is segmented, we would like the algorithm to
  take into account the finer scale

\item coarse-to-fine principle - common principle in vision; in fact
  studies suggest that this a principle in human vision
  \cite{neri2011coarse,hegde2008time}

\item one additional point - when computing the coarse structure ; we
  want to make sure that the coarsing of image data does not destroy
  segmentation boundaries (as in pyramid-based methods
  \cite{arbelaez2014multiscale,maire2013progressive} ) ; thus we use
  the concept of shape-tailored scale space \cite{khan2015shape}, and
  extend it to integrate all scales and have this coarse-to-fine
  property with the heat equation

\item we demonstrate our method in segmenting objects from video by
  motion\cite{cremers2005motion} and show an imporvement just by
  integrating the scale space of motion residual; this is application
  where object is in clutter and it is important to capture the coarse
  structure

\end{itemize}

Contributions:

\begin{itemize}
\item formulate energy that has preference to segmenting the coarse structure of
  the image over the fine scale;, while avoiding smoothing across
  region boundaries

\item all scales are intergrated; no pyramid computation nor scale
  space computation

\item we formulate the optimization method for this energy and the
  optimization avoids the scale space computation

\item works on the finest image scale, rather than pyramid computations

\item avoids clutter and fine ; irrelevant details of the image

\item apply to motion segmentation - detecting objects where the
  coarse structure is important and acoids clutter

\end{itemize}

and continous optimization
methods have played an important role in segmentation. Many of these
existing segmentation algorithms (e.g.,
\cite{chan2001active,pock2008convex}) process data at a single scale,
i.e., the original scale of the image. The need for multiscale cues in
producing high quality segmentations (e.g.,
\cite{perona1990scale,maire2013progressive,arbelaez2014multiscale}) is well
understood.

}

\section{Energy and Optimization}

In this section, our goal is to construct the energy with preference
to segmenting the coarse structure of the image(s) over finer
structure, without losing the ability to include fine-scale structure
in the segmentation. We use a scale space defined within regions of
the image, called a \emph{Shape-Tailored Scale Space}, which computes
coarse-structure of the data without blurring across region
boundaries, to construct the energy. Finally, we show that in
optimizing the energy, which requires multiple scales to define, it is
not necessary to compute multiple scales of the data, providing a
convenient implementation using the natural resolution of the image.

\subsection{Shape-Tailored Heat Scale Space}
The Gaussian Scale Space (see Figure~\ref{fig:gaussian_ss}),
constructed by smoothing the image with a Gaussian at a continuum of
scales (variances), has been shown to be the only scale space
satisfying natural axiomatic properties, which includes non-creation
of new structures in the data with increasing scale. The last property
implies that increasing scales represent coarser representations of
the data. The Gaussian Scale Space can be generalized to be defined
within regions (subsets of the image) of arbitrary shape by using the
Heat Equation. The solution to the Heat Equation defaults to Gaussian
smoothing when the domain is $\R^2$, and approximately so when the
domain is a rectangle, as in an image. The Heat Equation, defined in a
region $R$, is defined as follows:
\begin{equation}
  \label{eq:heat_eqn}
  \begin{cases}
    \partial_t u(t,x) = \Delta u(t,x) & x\in R, \, t>0 \\
    \nabla u(t,x)\cdot N = 0 & x\in \partial R, \, t>0 \\
    u(0,x) = I(x) & x\in R
  \end{cases}
\end{equation}
where $u : [0,+\infty) \times R \to \R^k$ denotes the scale space,
$R \subset \Omega \subset \R^2$ is the domain (or subset) of the image
$\Omega$, $I$ is the image, $\partial R$ denotes the boundary of $R$,
$N$ is the unit outward normal vector to $R$, $\nabla$ denotes the
vector of partials, $\Delta$ denotes the Laplacian, $\partial_t$
denotes the partial derivative with respect to $t$, and $t$ is the
scale parameter parameterizing the scale space. Note that $t$ is
related to $\sigma$, the standard deviation of the Gaussian kernel, by
$\sigma = \sqrt{2t}$, in the case the domain is $\R^2$.

\def\fHeightSS{0.7in}
\begin{figure}[bt]
  \centering
  \begin{tabular}{ccccc}
    $t=0$ & $t=50$ & $t=100$ &$t=150$ &$t=200$ \\
  \includegraphics[totalheight=\fHeightSS]{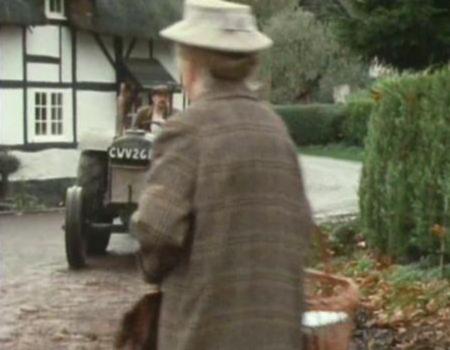} &
  \includegraphics[totalheight=\fHeightSS]{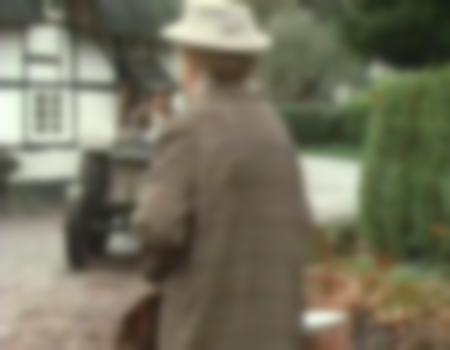} &
  \includegraphics[totalheight=\fHeightSS]{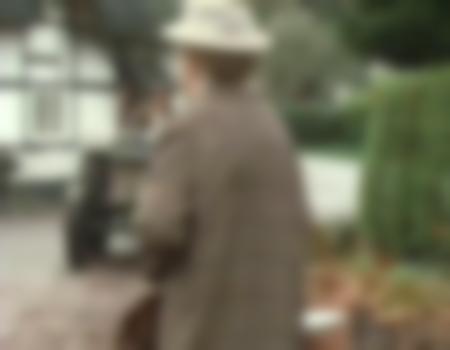} &
  \includegraphics[totalheight=\fHeightSS]{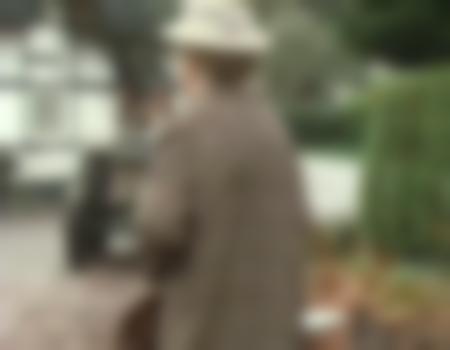} &
  \includegraphics[totalheight=\fHeightSS]{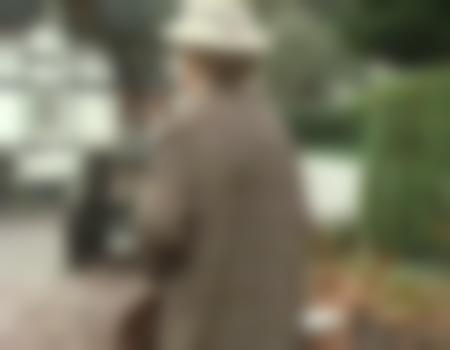}
  \end{tabular}
  \caption{Gaussian scale space (solution of Heat Equation) for
    various times (scales). Notice the quick diffusion of fine scale
    structures, and the persistence of coarse structure over much of
    the scale space. The persistence of coarse structure is important
    in defining our coarse-to-fine segmentation scheme.}
  \label{fig:gaussian_ss}
\end{figure}

This construction of the Gaussian Scale Space using the Heat Equation
is particularly useful for segmentation, as it allows us to
conveniently compute coarse representations of the data when $R$ is a
region of a segmentation. By using the PDE in \eqref{eq:heat_eqn}, we
may naturally smooth only within an arbitrary shaped subset of the
image without integrating information across the region boundary. If
the regions are chosen to be the correct segmentation, this avoids
blurring data across segmentation boundaries. However, one does not
know the segmentation a-priori, and thus the regions are
simultaneously estimated together with the scale spaces within the
regions in the optimization problem that we define next.

\subsection{Coarse-Scale Preferential Energy}
A property of the Gaussian scale space that is relevant in defining
our coarse-scale preferential energy is that the Heat Equation removes
the fine structure of the image in short time, and spends more of its
time removing coarse structure (see
Figure~\ref{fig:gaussian_ss}). Therefore, integrating a data term in
segmentation problems over the scale parameter of the Heat Equation,
would give preference to segmentations that separate the coarse
structure of the image without ignoring contributions from the fine
structure. This intuition leads us to construct the following
segmentation energy defined on possible segmentations of the domain as
a data term:
\begin{equation} \label{eq:energy}
  E( \{R_i\}_{i=1}^N ) = \sum_{i=1}^N E_i(R_i), \quad
  E_i(R_i) = \int_0^T\int_{R_i} | u_i(t,x) - a_i |^2 \ud x \ud t,
\end{equation}
where $T>0$ is the final time, $\{R_i\}_{i=1}^N$ are a collection of
regions forming the segmentation, and $a_i \in \R^k$ is the average
value of $u_t$ within $R_i$. It can be shown that $a_i$ is independent
of $t$. The parameter $T$ will be eliminated below as we take the
limit as $T\to \infty$ \footnote{Note that as $t \to \infty$, the
  solution of the Heat Equation approaches the average value of the
  input, i.e., $u_i(t,x) \to a_i$. Thus, very coarse scale components
  of the scale space are mitigated by the energy. This means fine
  aspects of the data still play a role in the energy.}. This energy
is the mean-squared error of the image within the region across all
scales. It generalizes common single scale segmentation models,
including piecewise constant Mumford-Shah (Chan-Vese
\cite{vese2002multiphase}), and piecewise smooth Mumford-Shah
\cite{mumford1989optimal}. Note that one way to generalize the latter
is by taking the initial condition for the Heat Equation
\eqref{eq:heat_eqn} as $I-f$ where $f$ is a smooth version of $I$;
this results in $a_i=0$. We will see in Section~\ref{sec:moseg} that
it also generalizes motion segmentation models.

To further justify the intuition that the energy chooses regions so
that coarse structure of the image has more influence than the fine
structure, we analyze a term of the energy above in Fourier
domain. Since Fourier analysis is convenient in rectangular domains,
we analyze the energy when the region is $R_i=\R^2$. In this case, the
energy can be written in terms of its Fourier transform as:
\begin{lemma} Suppose $I : \R^2 \to \R$ and $a = \int_{\R^2} I(x) \ud x = \int_{\R^2} u(t,x) \ud x$. Then
  \begin{equation}
    E =  \int_0^{\infty}\int_{\R^2} |u(t,x)-a|^2 \ud x\ud t = \int_{\R^2} |H(\omega)\hat I(\omega)|^2 \ud \omega, \,\, \mbox{where } H(\omega) = \frac{1}{\sqrt{2} |\omega|},
  \end{equation}
  where $\hat I$ denotes the Fourier transform, and $\omega$ denotes frequency.
\end{lemma}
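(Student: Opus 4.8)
The plan is to move to the Fourier domain, apply Plancherel at each fixed scale $t$, and then exchange the spatial and scale integrals. First, when $R=\R^2$ the solution of \eqref{eq:heat_eqn} is the convolution of $I$ with the heat kernel, so in the Fourier domain $\widehat{u(t,\cdot)}(\omega) = e^{-t|\omega|^2}\,\hat I(\omega)$; this follows by taking the Fourier transform of \eqref{eq:heat_eqn}, which turns the heat equation into the decoupled ODEs $\partial_t \hat u(t,\omega) = -|\omega|^2 \hat u(t,\omega)$ with initial data $\hat u(0,\omega)=\hat I(\omega)$. I would use the normalization of the Fourier transform under which $\widehat{\Delta f}(\omega)=-|\omega|^2\hat f(\omega)$ and Plancherel is an isometry, which is exactly the normalization consistent with the stated constant $H(\omega)=1/(\sqrt2\,|\omega|)$. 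Subtracting the constant $a$ only modifies the zero-frequency content, so $\widehat{u(t,\cdot)-a}(\omega)=e^{-t|\omega|^2}\hat I(\omega)$ for $\omega\neq 0$; since $\{\omega=0\}$ has Lebesgue measure zero, Plancherel gives, for each $t>0$,
\begin{equation*}
  \int_{\R^2}|u(t,x)-a|^2\ud x \;=\; \int_{\R^2} e^{-2t|\omega|^2}\,|\hat I(\omega)|^2\ud\omega .
\end{equation*}

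Next I would integrate this identity over $t\in(0,\infty)$. Because the integrand $(t,\omega)\mapsto e^{-2t|\omega|^2}|\hat I(\omega)|^2$ is nonnegative and jointly measurable, Tonelli's theorem permits exchanging the order of integration, so
\begin{equation*}
  E \;=\; \int_{\R^2}|\hat I(\omega)|^2 \left(\int_0^{\infty} e^{-2t|\omega|^2}\ud t\right)\ud\omega
     \;=\; \int_{\R^2} \frac{|\hat I(\omega)|^2}{2|\omega|^2}\ud\omega ,
\end{equation*}
and writing $1/(2|\omega|^2)=|H(\omega)|^2$ with $H(\omega)=1/(\sqrt2\,|\omega|)$ gives the claim. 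If one prefers to keep $T$ finite and pass to the limit, the same computation yields the multiplier $(1-e^{-2T|\omega|^2})/(2|\omega|^2)$ and monotone convergence produces the stated limit as $T\to\infty$.

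The one place that requires care — and, in fact, the reason the mean $a$ is subtracted in \eqref{eq:energy} — is the behaviour near $\omega=0$: the inner $t$-integral $\int_0^\infty e^{-2t|\omega|^2}\ud t$ diverges there, so the right-hand side is finite only once the zero-frequency (DC) component is removed, i.e. once $\hat I(0)=a=0$; after that, $|\hat I(\omega)|^2/|\omega|^2$ is locally integrable near the origin provided $\hat I$ vanishes to first order there (e.g. if $I\in L^1\cap L^2$ with zero integral and mild decay, so $u(t,\cdot)-a\in L^2$ for every $t$). I would therefore state this zero-mean / integrability hypothesis explicitly so that all the applications of Plancherel and Tonelli are legitimate; absent it, the identity still holds in $[0,+\infty]$ with both sides infinite. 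Beyond this subtlety the argument is routine, and the only real bookkeeping is keeping the Fourier normalization consistent so that the constant comes out to exactly $1/\sqrt2$ rather than being off by a power of $2\pi$.
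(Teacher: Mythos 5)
Your proposal is correct and follows essentially the same route as the paper's own proof: Fourier-transform the heat equation to get $\hat u(t,\omega)=e^{-t|\omega|^2}\hat I(\omega)$, apply Parseval/Plancherel at each scale, and exchange the $t$ and $\omega$ integrals to evaluate $\int_0^\infty e^{-2t|\omega|^2}\,\mathrm{d}t = 1/(2|\omega|^2)$. Your additional remarks on the zero-frequency component and the Tonelli justification are sound refinements of the same argument rather than a different approach.
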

The proof is a straight forward application of Parseval's Theorem,
details of which can be found in Supplementary materials. The function
$H$ decays the high frequency components of $I$ at a linear rate, thus
the energy gives preference to the coarse image structure. Note that
without integrating over the scale space, the energy in Fourier domain
has the same expression, except with $H=1$, thus having equal
preference to both coarse and fine structure. Since Fourier analysis
is not particularly simple for regions that are not rectangular, the
original energy is not formulated in Fourier domain. However, we show
in the next sub-section that the optimization of the energy using the
scale-space can be expressed without computing the entire scale space,
which is convenient in implementation.

\subsection{Constrained Optimization Problem}
\label{subsec:opt_problem}

The energy of interest \eqref{eq:energy} is a function of regions, and
thus we design an optimization scheme with respect to the
regions. Since the integrand of the energy depends on the regions
nonlinearly, as the Heat Equation is a function of the region, the
energy is not convex, and thus we apply gradient descent. In order to
compute the gradient, we formulate the energy minimization as a
constrained optimization problem. That is, we treat the minimization
of the energy \eqref{eq:energy} as defined on both the regions $R_i$
\emph{and} $u_i$ with the constraint that $u_i$ satisfies the Heat
Equation \eqref{eq:heat_eqn}. This formulation allows us to apply the
technique of Lagrange multipliers, which makes computations
simpler. In particular, the technique allows us to decouple the
nonlinear dependence of $u_i$ on $R_i$, since the latter variables can
be treated independently.

Since all terms of the energy \eqref{eq:energy} have the same
form, we focus on computing the gradient for any one term. For
convenience in notation, we avoid the subscript $i$ denoting the index
of the region. We may formulate the energy as a function of region
$R$, $u$, and a Lagrange multiplier $\lambda : [0,T]\times R \to \R^k$
with the constraint that $u$ satisfies the Heat Equation:
\begin{equation} \label{eq:energy_lagrange}
  E(R, u, \lambda) = \int_0^T \int_R f(u) \ud x \ud t +
  \int_0^T \int_R \left( \nabla \lambda \cdot \nabla u +
    \lambda \partial_t u \right) \ud x \ud t.
\end{equation}
We have excluded the dependencies on $x,t$ for convenience of
notation. We have also provided a more general form of the squared
error with a general function $f$ of $u$. The second term comes from
the weak form of the Heat Equation. Note that integrating by parts to
move the gradient from $\lambda$ to $\nabla u$ gives the classical
form of the Heat Equation in \eqref{eq:heat_eqn}. Therefore, the
second term is indeed obtained by the usual Lagrange multiplier
technique.

We may now compute the gradient for $E$ \eqref{eq:energy_lagrange} by
deriving the optimizing conditions in $u$ and $\lambda$. Optimizing in
$\lambda$ simply results in the original Heat Equation constraint, so
we compute the optimizing condition for $u$ by computing the
derivative (variation) of $E$ with respect to $u$. This results in a
solution for $\lambda$ as given below:
\begin{lemma}
  The Lagrange multiplier $\lambda$ satisfies the following Heat
  Equation with forcing term, evolving backwards in time: 
  \begin{equation}
    \begin{cases}
      \partial_t \lambda(t,x) + \Delta \lambda(t,x) = f'(u(t,x)) & x\in R\times
      [0,T] \\
      \nabla \lambda(t,x) \cdot N = 0 & x\in \partial R \times [0,T] \\
      \lambda(T,x) = 0 & x\in R
    \end{cases}.
  \end{equation}
  The solution of this equation can be expressed with Duhamel's
  Principle \cite{evans2010partial} as
  \begin{equation}
    \lambda(t,x) = -\int_t^T F(s-t, x; s) \ud s.
  \end{equation}
  where $F(\cdot,\cdot ; s) : [0,T] \times R \to \R$ is the solution
  of the forward heat equation \eqref{eq:heat_eqn} with zero forcing
  and initial condition $f'(u)$ evaluated at time $s$, i.e.,
  \begin{equation}
    \begin{cases}
      \partial_t F(t,x; s) - \Delta F(t,x; s) = 0 & x\in R\times
      [0,T] \\
      \nabla F(t,x; s) \cdot N = 0 & x\in \partial R \times [0,T] \\
      F(0,x; s) = f(s,x) & x\in R
    \end{cases}.
  \end{equation}

In the case that $f(u) =(u-a)^2$, $\lambda$ can be expressed as
\begin{equation} \label{eq:lambda_explicit}
  \lambda(t,x) = -2\int_t^T ( u(2s-t,x) - a ) \ud s.
\end{equation}

\end{lemma}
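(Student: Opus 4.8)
The plan is to obtain $\lambda$ as the stationarity condition of the Lagrangian \eqref{eq:energy_lagrange} in the variable $u$. Because $\lambda$ already enforces the Heat Equation, $u$ may be treated as a free field, so the first variation of $E$ with respect to $u$ must vanish in every admissible direction. First I would perturb $u \mapsto u + \epsilon v$ for an arbitrary smooth $v$ on $[0,T]\times R$ and differentiate at $\epsilon = 0$, obtaining the condition
\begin{equation}
  \int_0^T\!\!\int_R f'(u)\,v \ud x\ud t + \int_0^T\!\!\int_R \left( \nabla\lambda\cdot\nabla v + \lambda\,\partial_t v \right) \ud x\ud t = 0 .
\end{equation}

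Next I would integrate by parts to move all derivatives off $v$. In space, $\int_R \nabla\lambda\cdot\nabla v\ud x = \int_{\partial R}(\nabla\lambda\cdot N)\,v\ud S - \int_R \Delta\lambda\, v\ud x$; in time, $\int_0^T \lambda\,\partial_t v\ud t = \lambda(T,\cdot)v(T,\cdot) - \lambda(0,\cdot)v(0,\cdot) - \int_0^T \partial_t\lambda\, v\ud t$. Since the initial condition $u(0,\cdot)=I$ is prescribed data, admissible variations satisfy $v(0,\cdot)=0$, which kills the $t=0$ term. Collecting the interior, spatial-boundary, and terminal-time contributions and using that $v$ is otherwise arbitrary forces, respectively, $\partial_t\lambda + \Delta\lambda = f'(u)$ on $R\times[0,T]$, $\nabla\lambda\cdot N = 0$ on $\partial R\times[0,T]$, and $\lambda(T,\cdot)=0$, which is exactly the claimed backward Heat Equation with forcing.

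For the Duhamel representation I would reverse time, setting $\mu(\tau,x) = \lambda(T-\tau,x)$; a short computation shows $\mu$ solves the \emph{forward} Neumann Heat Equation with source $-f'(u(T-\tau,\cdot))$ and zero initial data $\mu(0,\cdot)=0$. Duhamel's Principle then yields $\mu(\tau,x) = -\int_0^\tau [e^{(\tau-\sigma)\Delta}f'(u(T-\sigma,\cdot))](x)\ud\sigma$, where $e^{r\Delta}$ is the Neumann heat semigroup on $R$ and the bracketed term is precisely $F(\tau-\sigma,x;T-\sigma)$ in the paper's notation (with initial data $f'(u)$ frozen at the indicated time). Substituting $t = T-\tau$ and then changing variables $s = T-\sigma$ turns this into $\lambda(t,x) = -\int_t^T F(s-t,x;s)\ud s$.

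Finally, for $f(u) = (u-a)^2$ one has $f'(u) = 2(u-a)$, so $F(\cdot,\cdot\,;s)$ solves the Heat Equation with initial data $2(u(s,\cdot)-a)$. Here I would use that $a$ is constant in $x$ and $t$ (so $u-a$ itself solves the Neumann Heat Equation with initial data $I-a$) together with the semigroup property (evolving $u(s,\cdot)-a$ for an additional time $r$ gives $u(s+r,\cdot)-a$), whence $F(s-t,x;s) = 2(u(2s-t,x)-a)$; integrating gives $\lambda(t,x) = -2\int_t^T (u(2s-t,x)-a)\ud s$, noting this is well defined since $u$ extends to all positive times. The computation is mostly routine; the step that needs the most care is the time reversal in the Duhamel argument — keeping the signs straight when turning $\partial_t\lambda+\Delta\lambda=f'(u)$ into a forward equation and aligning the two changes of variables so that $F$ appears with the correct freezing time $s$ — and it is also worth stating explicitly why, in the Lagrangian formulation, the variation $v$ is genuinely unconstrained on $\partial R$ and at $t=T$, which is what licenses reading off the Neumann and terminal conditions on $\lambda$.
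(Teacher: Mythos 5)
Your proposal is correct and follows essentially the same route as the paper's proof: stationarity of the Lagrangian in $u$ (with $v(0,\cdot)=0$ because the initial data is fixed) yields the backward forced Heat Equation with Neumann and terminal conditions, Duhamel's Principle gives the integral representation, and the semigroup property gives the explicit formula for $f(u)=(u-a)^2$; your explicit time-reversal bookkeeping for Duhamel is just a more careful rendering of the step the paper does directly via the $\delta(s-t)$ decomposition of the forcing. Note also that you correctly take the initial data of $F$ to be $f'(u(s,\cdot))$, consistent with the forcing term and with the final formula, whereas the lemma statement writes $f$ there, which appears to be a typo.
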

The formula for $\lambda$ in \eqref{eq:lambda_explicit} is convenient
as a numerical integration scheme is no longer required after
computation of the scale space $u$.

With the optimizing conditions for $u$ and $\lambda$ of $E$, we can
now compute the gradient of the energy $E$ with respect to $R$ in
terms of $\lambda$ and $u$:
\begin{proposition}
  The gradient of $E$ with respect to the boundary $\partial R$ can be expressed as
  \begin{equation} \label{eq:grad_energy}
    \nabla_{\partial R} E = \int_0^T
    \left[ f(u) + \nabla \lambda \cdot \nabla u +
      \lambda \partial_t u \right] \ud t \cdot N,
  \end{equation}
  where $N$ is the normal vector to $\partial R$. In the case that $f(u) = (u-a)^2$ and as $T$ gets large, the gradient approaches
  \begin{equation} \label{eq:grad_simple}
    \nabla_{\partial R} E = \left(-\frac 1 2 |\nabla \lambda(0) |^2 -
      \lambda(0)[u(0)-a]\right) N,  \quad
    \lambda(0,x) = -\int_0^{2T} (u(s,x) -a) \ud s,
  \end{equation}
  where $\lambda(0)$ and $u(0)$ denote the functions $\lambda$ and $u$ at
  time zero.
\end{proposition}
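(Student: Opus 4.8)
The plan is to establish \eqref{eq:grad_energy} by the standard adjoint-state (envelope-theorem) argument for shape functionals with a PDE constraint, and then to obtain \eqref{eq:grad_simple} by specializing to $f(u)=(u-a)^2$ and passing to the limit $T\to\infty$ using the explicit multiplier \eqref{eq:lambda_explicit}.

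For \eqref{eq:grad_energy} I would view $E$ in \eqref{eq:energy_lagrange} as a function of $R$ alone, with $u=u_R$ and $\lambda=\lambda_R$ the solutions on $R$ of the Heat Equation \eqref{eq:heat_eqn} and of the backward (adjoint) Heat Equation of the preceding Lemma. Perturbing $\partial R$ by a normal velocity $V$ and differentiating, the chain rule gives $\tfrac{d}{d\epsilon}E=\partial_R E[V]+\partial_u E[\dot u]+\partial_\lambda E[\dot\lambda]$, where $\dot u,\dot\lambda$ are the material derivatives induced by $V$. The term $\partial_\lambda E[\dot\lambda]=\int_0^T\!\int_R(\nabla\dot\lambda\cdot\nabla u+\dot\lambda\,\partial_t u)\ud x\ud t$ is zero because this is precisely the weak form of the Heat Equation, which $u$ satisfies; and $\partial_u E[\dot u]=0$ because $\lambda$ was chosen in the previous Lemma exactly so that the first variation of $E$ in $u$ vanishes (this is what the backward equation with $\nabla\lambda\cdot N=0$ and $\lambda(T)=0$ encodes). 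Hence $\tfrac{d}{d\epsilon}E=\partial_R E[V]$, and by the transport formula the explicit $R$-dependence of the domain integrals contributes $\int_{\partial R}\big(\int_0^T h\ud t\big)(V\cdot N)\ud S$ with $h:=f(u)+\nabla\lambda\cdot\nabla u+\lambda\partial_t u$, which is \eqref{eq:grad_energy}.

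For \eqref{eq:grad_simple} I would first rewrite \eqref{eq:lambda_explicit} by the substitution $\tau=2s-t$ as $\lambda(t,x)=-\int_t^{2T-t}(u(\tau,x)-a)\ud\tau$, so that $\lambda(T,x)=0$, $\lambda(0,x)=-\int_0^{2T}(u(s,x)-a)\ud s$, and $\partial_t\lambda(t,x)=(u(t,x)-a)+(u(2T-t,x)-a)$. Since the Heat Equation drives $u(t,\cdot)\to a$ (exponentially fast on a bounded region by the spectral gap), every occurrence of $u(2T-t,\cdot)$ or $\nabla u(2T-t,\cdot)$ is uniformly small for $t\in[0,T]$ and contributes $o(1)$ after integration, so I may work with the limiting identities $\partial_t\lambda=u-a$, $\lambda(\infty)=0$, $\nabla\lambda(\infty)=0$. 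Then I split $\int_0^T h\ud t$ into its three terms: integrating $\int_0^T\lambda\partial_t u\ud t$ by parts in $t$ gives $[\lambda u]_0^T-\int_0^T u\,\partial_t\lambda\ud t=-\lambda(0)u(0)-\int_0^T u(u-a)\ud t+o(1)$, and combining this with $\int_0^T f(u)\ud t$ via $u(u-a)=(u-a)^2+a(u-a)$ and $\int_0^T(u-a)\ud t=-\lambda(0)+o(1)$ makes the $f(u)$ term cancel and leaves $-\lambda(0)(u(0)-a)$; for the middle term, $\nabla u=\partial_t(\nabla\lambda)+o(1)$ gives $\nabla\lambda\cdot\nabla u=\tfrac{1}{2}\partial_t|\nabla\lambda|^2+o(1)$, hence $\int_0^T\nabla\lambda\cdot\nabla u\ud t=\tfrac{1}{2}(|\nabla\lambda(T)|^2-|\nabla\lambda(0)|^2)+o(1)=-\tfrac{1}{2}|\nabla\lambda(0)|^2+o(1)$. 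Adding these and letting $T\to\infty$ yields $\int_0^\infty h\ud t=-\tfrac{1}{2}|\nabla\lambda(0)|^2-\lambda(0)(u(0)-a)$, and multiplying by $N$ gives \eqref{eq:grad_simple}.

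The main obstacle is the rigorous justification of the first step: shape differentiation of a PDE-constrained functional needs the distinction between material and shape derivatives, and in particular the boundary terms arising from integrating $\partial_u E[\dot u]$ by parts are delicate because $\dot u$ need not satisfy a homogeneous Neumann condition on the moving boundary; I would follow the formal shape calculus of \cite{aubert2003image,delfour2011shapes}, a fully rigorous treatment requiring the transport theorem on evolving domains and regularity of $u,\lambda$ up to $\partial R$. The remaining, milder point is controlling the $T\to\infty$ limit in the second step, which rests only on the exponential decay of $u(t,\cdot)-a$ and $\nabla u(t,\cdot)$ (integrability of $\int_0^\infty\!\int_R|u-a|^2$ being already implicit in the Fourier Lemma).
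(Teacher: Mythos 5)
Your proposal is correct and follows essentially the same route as the paper: the shape gradient \eqref{eq:grad_energy} via the Lagrange-multiplier/adjoint (envelope) argument treating $u$ and $\lambda$ as stationary, then integration by parts in $t$ using $\lambda(T)=0$ and $\partial_t\lambda \to u-a$ to cancel the $f(u)$ term, leaving $-\lambda(0)(u(0)-a)$ plus the gradient cross term. The only (harmless) difference is in evaluating $\int_0^\infty \nabla\lambda\cdot\nabla u\,\ud t$: you recognize the integrand as the perfect derivative $\tfrac12\partial_t|\nabla\lambda|^2$, whereas the paper symmetrizes the double integral $-\int_0^\infty\!\int_t^\infty \nabla u(\tau)\cdot\nabla u(t)\,\ud\tau\,\ud t$; both yield $-\tfrac12|\nabla\lambda(0)|^2$.
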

The simplification in \eqref{eq:grad_simple} of the gradient is
particularly convenient since it only involves explicit functions of
the scale space $u$. More conveniently, we may express $\lambda(0)$ as
the solution of the Poisson equation at the native scale of the image so that
it is not necessary to compute the whole scale space $u$ to evaluate the gradient
\eqref{eq:grad_simple}:
\begin{lemma}
  As $T$ gets large, $\lambda(0)$ defined in \eqref{eq:grad_simple} approaches
  the solution of the Poisson equation:
  \begin{equation} \label{eq:poisson_eqn}
    \begin{cases}
      -\Delta \lambda(0,x) = a- u(0,x) & x\in R \\
      \nabla \lambda(0,x) \cdot N =  0 & x\in \partial R\\
      \mean{\lambda(0)} = 0
    \end{cases},
  \end{equation}
  where $\mean{\lambda(0)} = \frac{1}{|R|} \int_{R} \lambda(0,x) \ud x$ is the average value over $R$, and $|R|$ is the area of $R$.
\end{lemma}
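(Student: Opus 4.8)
The plan is to identify the limit explicitly: set $w(x) := -\int_0^\infty \left( u(s,x) - a \right) \ud s$, show that this improper integral converges, verify that $w$ solves the Poisson problem \eqref{eq:poisson_eqn}, and then observe that the finite-time expression $\lambda(0,x) = -\int_0^{2T} \left( u(s,x) - a \right) \ud s$ converges to $w$ as $T \to \infty$.

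\emph{Step 1 (decay and convergence).} Since $R$ is bounded and connected, the Poincar\'e inequality supplies a constant $C > 0$ with $\| v - \mean{v} \|_{L^2(R)}^2 \le C \| \nabla v \|_{L^2(R)}^2$. The Neumann condition in \eqref{eq:heat_eqn} forces $\mean{u(t,\cdot)} = \mean{I} = a$ for every $t \ge 0$. Differentiating $\| u(t,\cdot) - a \|_{L^2(R)}^2$ in $t$, using $\partial_t u = \Delta u$ and integrating by parts with $\nabla u \cdot N = 0$, one gets $\der{}{t} \| u(t,\cdot) - a \|_{L^2(R)}^2 = -2 \| \nabla u(t,\cdot) \|_{L^2(R)}^2 \le -\frac{2}{C} \| u(t,\cdot) - a \|_{L^2(R)}^2$, hence $\| u(t,\cdot) - a \|_{L^2(R)} \le e^{-t/C} \| I - a \|_{L^2(R)}$. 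Therefore $\int_0^\infty \| u(s,\cdot) - a \|_{L^2(R)} \ud s < \infty$, so $w \in L^2(R)$ is well defined and $\| \lambda(0,\cdot) - w \|_{L^2(R)} = O(e^{-2T/C})$ as $T \to \infty$.

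\emph{Step 2 ($w$ solves the Poisson problem).} Formally, $\Delta u(s,x) = \partial_s u(s,x)$ gives $-\Delta w(x) = \int_0^\infty \Delta u(s,x) \ud s = \int_0^\infty \partial_s u(s,x) \ud s = a - u(0,x)$; the Neumann condition for $w$ follows by integrating $\nabla u(s,\cdot) \cdot N = 0$ over $s$; and $\mean{w} = -\int_0^\infty \left( \mean{u(s,\cdot)} - a \right) \ud s = 0$. To make the exchange of $\Delta$ with $\int_0^\infty$ rigorous I would pass to the weak formulation: for a test function $\varphi \in C^\infty(\bar R)$, Fubini (legitimate by Step 1) together with two integrations by parts in $x$ using $\nabla u \cdot N = 0$ yields $\int_R w \, \Delta \varphi \ud x = -\int_0^\infty \int_R \partial_s u(s,x) \, \varphi(x) \ud x \ud s = \int_R \left( u(0,x) - a \right) \varphi(x) \ud x$, which is precisely the weak form of $-\Delta w = a - u(0,\cdot)$ with homogeneous Neumann data; elliptic regularity then upgrades $w$ to a classical solution.

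\emph{Step 3 (uniqueness, conclusion, and the hard part).} The Neumann Poisson problem in \eqref{eq:poisson_eqn} is solvable because its data meets the compatibility condition $\int_R \left( a - u(0,x) \right) \ud x = |R| \left( a - \mean{I} \right) = 0$, and the solution is unique once the additive constant is pinned by $\mean{\lambda(0)} = 0$; since $w$ satisfies all three conditions and $\lambda(0,\cdot) \to w$ in $L^2(R)$ by Step 1, the stated convergence follows. The only genuine obstacle is the analytic bookkeeping in Step 2 --- justifying the interchange of the Laplacian with the infinite-time integral and the convergence of that integral --- and both reduce to the exponential $L^2$-decay of $u(t,\cdot) - a$ from Step 1, i.e., the spectral gap of the Neumann heat semigroup on the bounded connected region $R$; the remaining ingredients ($\Delta u = \partial_t u$, conservation of the average, and compatibility of the Poisson data) are read straight off \eqref{eq:heat_eqn}.
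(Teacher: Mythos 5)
Your proof is correct and its core step is exactly the paper's: interchange the Laplacian with the time integral, use $\Delta u = \partial_s u$ and the fundamental theorem of calculus to get $-\Delta\lambda(0) = u(2T) - u(0) \to a - u(0)$. The paper stops at that one-line formal computation at finite $T$, whereas you additionally supply the convergence of the improper integral via the Neumann spectral gap, the weak-formulation justification of the interchange, and the verification of the boundary, mean-zero, and compatibility conditions --- all of which the paper leaves implicit.
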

\begin{proof}
  Since the proof is short, we provide it here:
  \begin{multline*}
    \Delta \lambda(0) = -\Delta \int_0^{2T} (u(s) - a) \ud s =
    -\int_0^{2T} \Delta u(s) \ud s = - \int_0^{2T} \partial_su(s)
    \ud s \\ = -( u(2T) - u(0) ) \to -( a - u(0) ) \mbox{ as } T\to \infty.
  \end{multline*}
\end{proof}

\comment{
\begin{figure}
  \caption{The sum of the solution to the heat equation over all
    scales approaches the solution of the Poisson equation defined on
    the native scale of the image. This property simplifies
    computation by avoiding computation of the scale space.}
\end{figure}
}

In practice, taking $T$ all the way to infinity may enforce too much
of the coarse structure, and in segmenting objects with fine details,
the evolution may take too long to finally determine the fine scale
structure. Thus, rather than taking $a$ to approximate $u(2T)$ in the
above computation, we instead approximate it with the solution of the
equation:
\begin{equation} \label{eq:poisson_approx}
  \begin{cases}
    v(x)-\alpha \Delta v(x) = I(x) & x\in R \\
    \nabla v(x)\cdot N = 0 & x\in \partial R
  \end{cases},
\end{equation}
which smooths $I$ (larger $\alpha$ smooths $I$ more and can be
regarded as a maximum scale parameter). For a fixed scale, this
solution qualitatively behaves similar to the Heat Equation (they are
both low-pass filters). In experiments, this approximation still gives
the desired coarse-to-fine behavior. Solving \eqref{eq:poisson_eqn}
with right hand side $v-u(0)$ and $\alpha$ set to the maximum
desirable scale approximates $-\int_0^{2T} ( u(t,x)-a ) \ud t$ for
finite $T$. We set $\alpha=20$ in experiments. The reason for using
the equation $v-\alpha \Delta v = I$ is that it is computationally
less costly to solve than solving the Heat Equation directly, and it
allows for fast updates as the regions evolve using the solution from
the previous iteration as a warm start.

In summary, we compute the solution of the Poisson equation
\eqref{eq:poisson_eqn} with right hand side $v - u(0)$ at the native
scale of the image to solve for $\lambda(0)$ in the computation of an
approximate gradient of $E_i$ in \eqref{eq:energy}. Then, the gradient
is computed using the formula in \eqref{eq:grad_simple} that requires
only simple operations (partial derivatives, squaring, etc). Therefore,
the effects of the scale space are compressed into two equations at
the native scale of the image.

\comment{
Summarizing, to compute an approximation of the gradient of $E_i$ in
\eqref{eq:energy}, one simply computes the solution of the Poisson
equation \eqref{eq:poisson_eqn} with right hand side $v-u(0)$ at the
native scale of the image to solve for $\lambda(0)$, and then computes
the gradient using the formula in \eqref{eq:grad_simple}, which
requires only simple operations (partial derivatives, squaring,
etc). Therefore, the effects of the scale space are compressed into
two equations at the native scale of the image.
}

\subsection{Multi-label Scheme and Implementation}
We now present a method for implementing the gradient flow derived in
the previous section. Since we are interested in applications with
possibly many regions in the segmentation of the image, we present a
method for implementing the gradient flow when there are $N$
regions. To achieve sub-pixel accuracy, we use relaxed indicator
functions $\phi_i : \Omega \to [0,1]$ with $i=1,\ldots, N$ to
represent the regions. Denote by $G_i$ the quantity
\eqref{eq:grad_simple} multiplying the normal vector for region $R_i$:
\begin{equation} \label{eq:gradient_terms}
  G_i  = -\frac 1 2 |\nabla \lambda_i(0) |^2 - \lambda_i(0)[u_i(0)-a], \quad
  \mbox{in } D(R_i)
\end{equation}
where $D(R_i)$ is a small dilation of $R_i$ and $\lambda_i(0)$ is the
solution of \eqref{eq:poisson_eqn} computed in this set. The extension
beyond the region is done so that the evolution of $\phi_i$ can be
defined around the curve, as in level set methods
\cite{osher1988fronts}. We can now derive the updated scheme for the
$\phi_i$ so that the zero level set evolves in such a way that it
matches the curve evolution induced by the gradient descent. This is
given in Algorithm~\ref{alg:multilabel_scheme}.

\begin{algorithm}
\begin{algorithmic}[1]
  \State Input: An initialization of $\phi_i$
  \Repeat
  \State Compute regions:
  $R_i = \{ x\in \Omega \, : \, i = \mbox{argmax}_j \phi_j(x) \}$

  \State Dilated regions $R_i$: $D(R_i)$
  \State Compute $\lambda_i$ in $D(R_i)$ by solving the Poisson equation \eqref{eq:poisson_eqn}
  \State Compute $G_i$ for band pixels $B_i = D(R_i)\cap
  D(\Omega\backslash R_i)$
  \State Update pixels $x\in D(R_i)\cap D(R_j)$ as follows:
  \[
  \phi_i^{\tau+\Delta\tau}(x) = \phi_i^{\tau}(x) -\Delta\tau( G_i(x) -
  G_j(x) )|\nabla \phi_i^{\tau}(x)| + \varepsilon \Delta \phi_i^{\tau}(x).
  \]
  \State Update all other pixels as
  \[
  \phi_i^{\tau+\Delta\tau}(x) = \phi_i^{\tau}(x) + \varepsilon \Delta \phi_i^{\tau}(x).
  \]
  \State Clip values between 0 and 1: $\phi_i = \max\{ 0, \min\{ 1, \phi_i \}\}$.
  \Until { regions have converged  }
\end{algorithmic}
\caption{Multi-label Gradient Descent}
\label{alg:multilabel_scheme}
\end{algorithm}

The update of the $\phi_i$ in Line 7 of
Algorithm~\ref{alg:multilabel_scheme} involves the term
$\Delta \phi_i^{\tau}$, which provides smoothness of the curve. More
sophisticated regularizers (such as length regularization) may be
used, but we have found this simple regularization sufficient. We
choose $\varepsilon = 0.005$ in experiments, and this does not need
to be tuned, as it is mainly for inducing regularity for computation
of derivatives of $\phi$.

\section{Application to Motion Segmentation}
\label{sec:moseg}
In this section, we show how the results of the previous section can
be applied to motion segmentation. Motion segmentation is the problem
of segmenting objects and/or regions with similar motions computed
using multiple images of the object(s). One of the challenges of
motion segmentation is that motion is inferred typically through a
sparse set of measurements (e.g., along image edges or corners), and
thus the motion signal is typically reliable for segmentation in
sparse locations\footnote{Although motion can be hallucinated by the
  use of regularizers, this motion is unreliable in segmentation
  (typically in near constant regions).}. By using a scale space
formulation of an energy for motion segmentation, coarse
representations of the motion signal are integrated and more greatly
impact the segmentation. This property increases the reliability of
motion segmentation (Figure~\ref{fig:motion_reliability}), and the
coarse-to-fine approach captures the coarse-structure without being
impacted by fine-scale distractions at the outset.

\def\fHeightMot{0.75in}
\begin{figure}
  \centering
  \begin{tabular}{cccccc}
    residual & ctf residual & non-ctf & ctf & non-ctf & ctf \\
  \includegraphics[clip,trim=10 0 80 5, width=\fHeightMot]{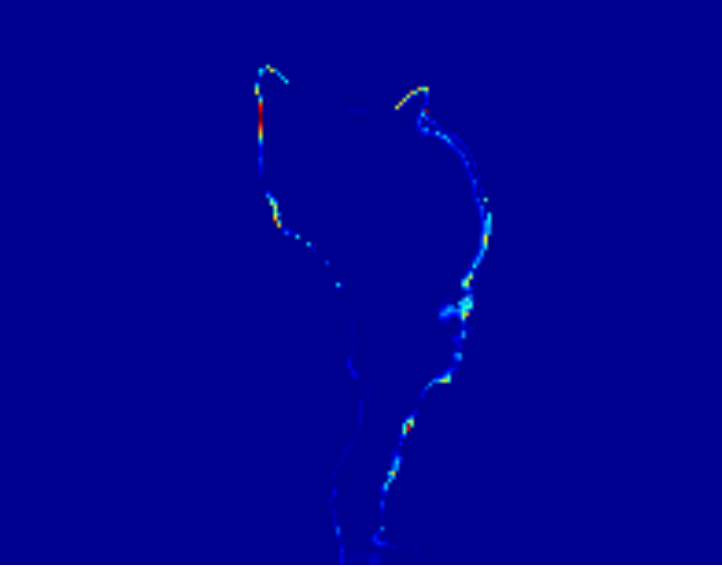} &
  \includegraphics[clip,trim=10 0 80 5, width=\fHeightMot]{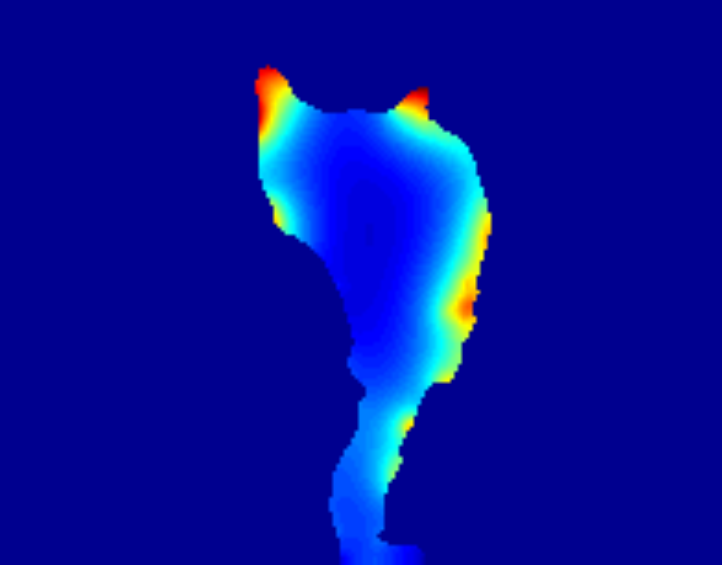} &
  \includegraphics[clip,trim=10 0 80 0, width=\fHeightMot]{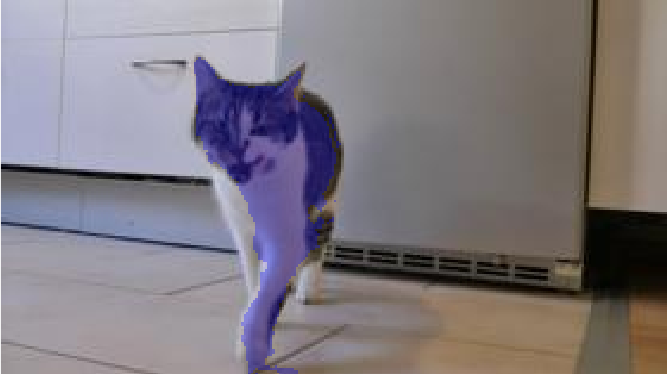} &
  \includegraphics[clip,trim=10 0 80 0, width=\fHeightMot]{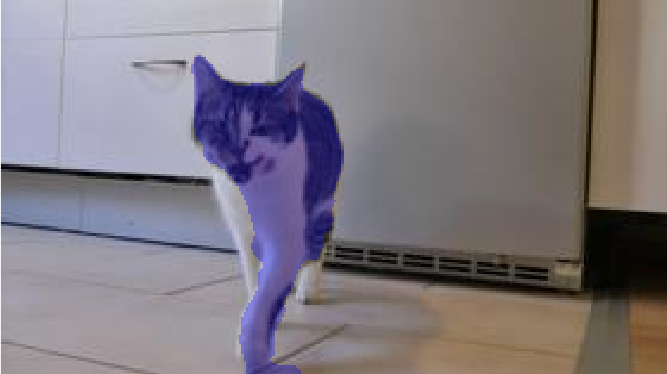} &
  \includegraphics[clip,trim=10 0 80 8, width=\fHeightMot]{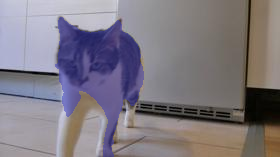} &
  \includegraphics[clip,trim=10 0 80 8, width=\fHeightMot]{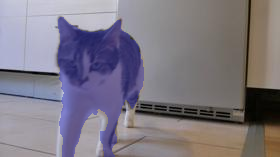}\\
  \includegraphics[clip,trim=40 50 133 20, width=\fHeightMot]{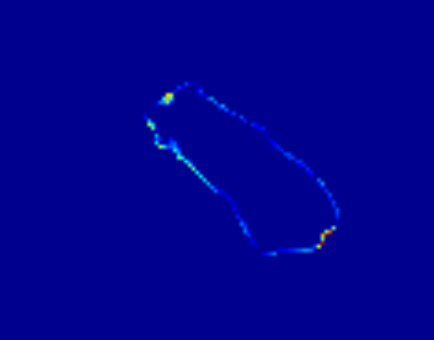} &
  \includegraphics[clip,trim=40 50 133 20, width=\fHeightMot]{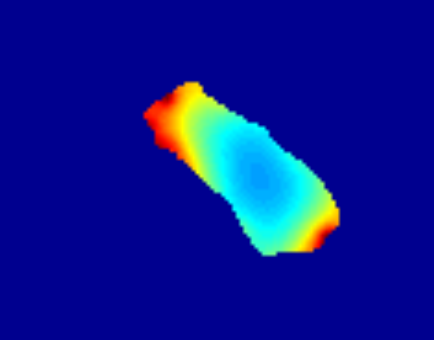} &
  \includegraphics[clip,trim=40 50 140 20, width=\fHeightMot]{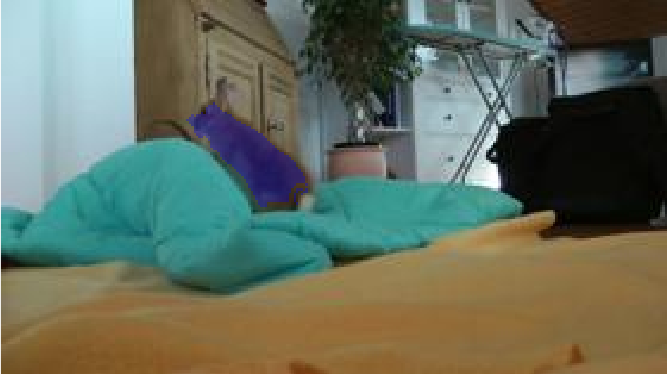} &
  \includegraphics[clip,trim=40 50 140 20, width=\fHeightMot]{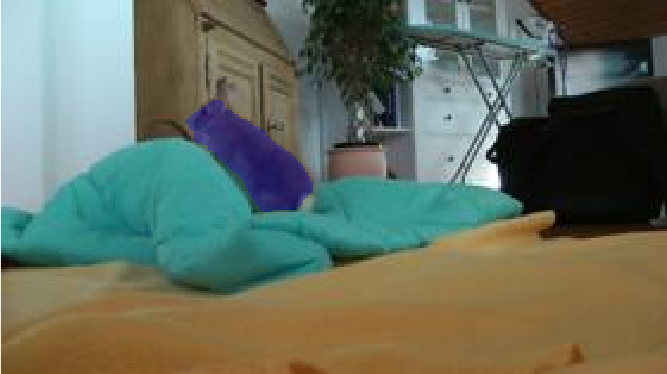} &
  \includegraphics[clip,trim=40 50 130 20, width=\fHeightMot]{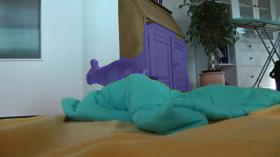} &
  \includegraphics[clip,trim=40 50 130 20, width=\fHeightMot]{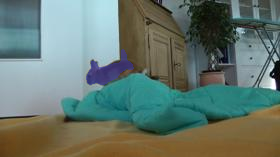}\\
    \includegraphics[clip,trim=70 40 30 10,width=\fHeightMot]{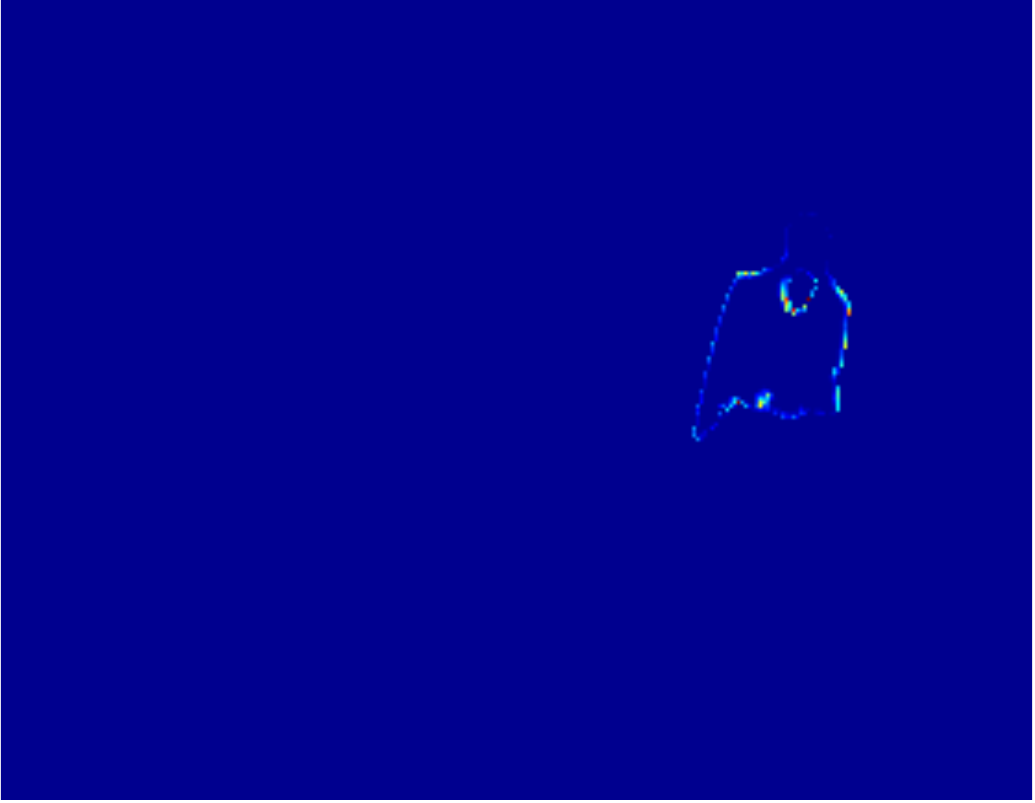} &
    \includegraphics[clip,trim=70 40 30 10,width=\fHeightMot]{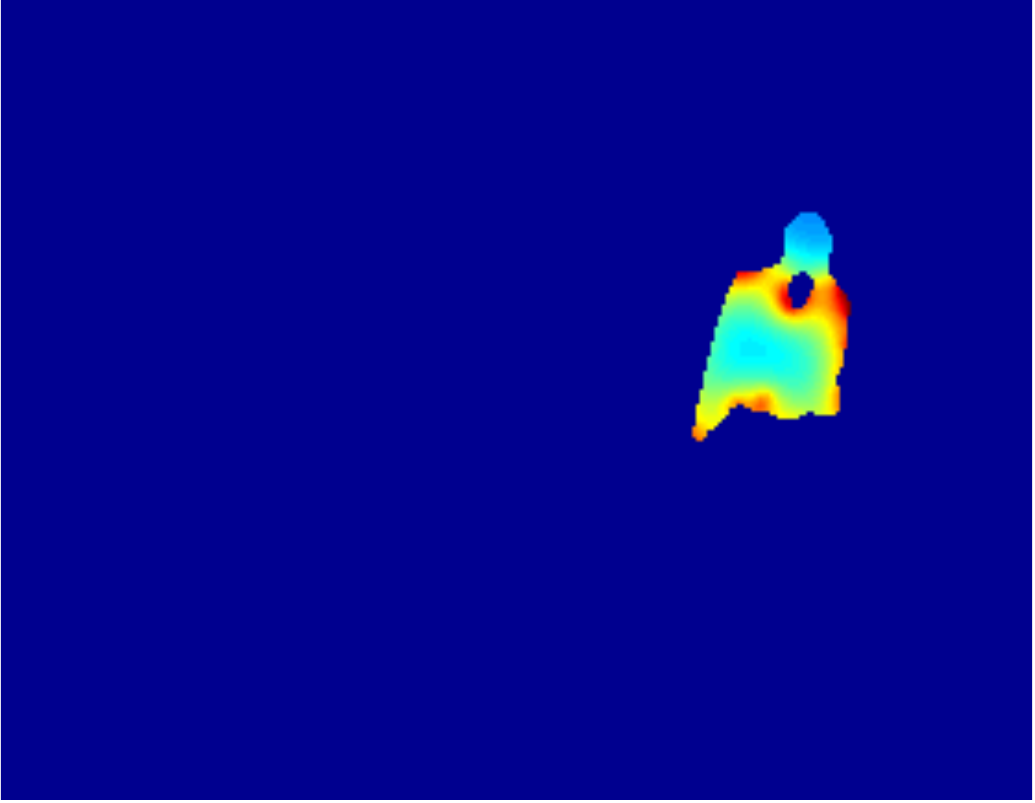} &               
    \includegraphics[clip,trim=70 40 25 10,width=\fHeightMot]{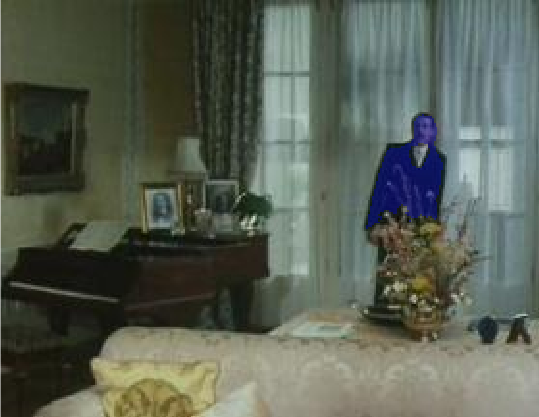} &
    \includegraphics[clip,trim=70 40 25 10,width=\fHeightMot]{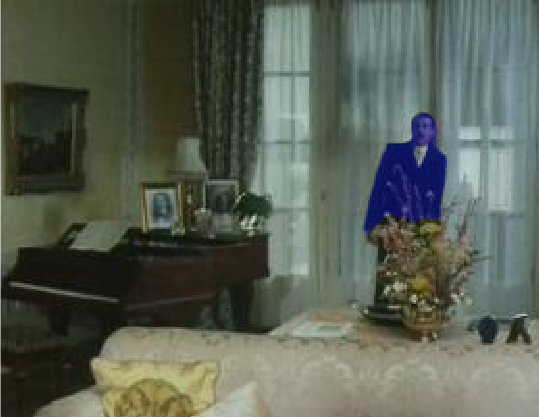} &
    \includegraphics[clip,trim=70 40 30 10,width=\fHeightMot]{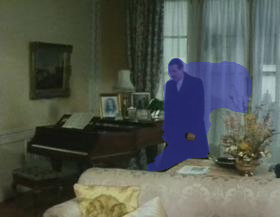} &
     \includegraphics[clip,trim=70 40 30 10,width=\fHeightMot]{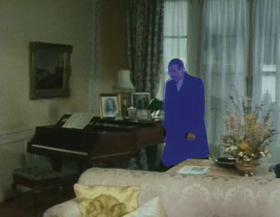}   
  \end{tabular}
  \caption{Motion residuals at a single scale are sparse (left
    column), leading to difficulties in using these cues in
    segmentation (non-ctf). Motion cues at a continuum of scales
    (ctf) provide a richer signal (2nd column), which
    increases reliability in using such cues for
    segmentation. Segmentations (in purple) are shown for a frame
    (middle two) and a few frames ahead (right two). Although errors
    in the non-ctf approach are subtle between frames, they quickly
    propagate across frames, compared to our approach.  }
  \label{fig:motion_reliability}
\end{figure}

With this motivation, we reformulate the motion segmentation problem
with scale space.  Let $I_0, I_1 : \Omega \to \R^k$ be two images of a
sequence where $\Omega$ is the domain of the image. For a given region
$R_i$, we define a mapping $w_i : R_i \to \Omega \subset \R^2$, which we
call a warp or deformation that back warps $I_1$ to $I_0$.  We assume
that $I_0$ and $I_1$ are related through $w_i$ by the Brightness
Constancy Assumption, except for occlusions, as in typical works in
the optical flow literature \cite{sun2010secrets}.  Define the pointwise
error of $w_i$ as
\begin{equation}
  \mbox{Res}_i(x) = \rho( |I_1(w_i(x)) - I_0(x)| ), \quad x\in R_i,
\end{equation}
where $\rho$ is a robust norm (for instance a truncated linear
function) to deal with the effects of deviations from Brightness
Constancy \cite{sun2010secrets}. We refer to this quantity as the
\emph{residual}. We would like to formulate an energy defined on
possible segmentations to reduce the residual and incorporate our
coarse-to-fine approach:
\begin{equation} \label{eq:Emseg}
  E_{mseg}(\{R_i\}_{i=1}^N) = \sum_{i=1}^N
  |\mean{ \mbox{Res}_i }|^2 + \int_0^T \int_{R_i \backslash O_i}
  ( u_i(t,x) - \mean{ \mbox{Res}_i } )^2 \ud x \ud t,
\end{equation}
where $O_i$ is the occluded part of $R_i$, $\mean{ \mbox{Res}_i}$ is
the mean value of the residual in $R_i \backslash O_i$, and
$u_i : [0,T] \times R_i \to \R^+$ is the scale space of the residual
in $R_i\backslash O_i$. We subtract the mean value of the residual
from the scale space so that it fits in the form of \eqref{eq:energy},
and since we would like to reduce the overall residual, we add the
mean value of the residual outside the integrand so that it is
minimized as well.  Had we not subtracted the mean value and not
integrated over scale and instead used only the native scale, this
would be the usual robust formulation of motion segmentation (e.g.,
\cite{sun2013fully}).

Because of the ambiguity of computing motion in occluded and
textureless regions, additional terms must be added to the energy, for
instance, terms involving fidelity to local appearance histograms. We
follow the formulation in \cite{yang2015self} to account for these
ambiguities, which uses an additional term with fidelity to color
histograms in the segmentation energy. Based on the reliability of the
motion residuals, the approach switches between segmentation by
residuals and color histograms.  We simply replace the classical
motion term (at a single scale) with our term \eqref{eq:Emseg}. We do
not integrate the color histogram energy over scales, as our goal is
to rely less on this term by improving the motion reliability. The
optimization involves iterative updates of the warps, occlusion, and
the regions. The technique we introduced only affects the updates of
the regions, replacing the gradient of the usual single scale motion
residual with the gradient of \eqref{eq:Emseg} computed by
\eqref{eq:gradient_terms}, and implemented using
Algorithm~\ref{alg:multilabel_scheme}. We apply our method
frame-by-frame segmenting a frame using one frame ahead and one frame
behind (so that backward motion is also used in segmentation). Then we
propagate the result to the next frame via the computed motion to
warm-start the segmentation in the next frame.

\section{Experiments}

\begin{table}[bt]
 \centering
  {
    \footnotesize
    \begin{tabular}{lc@{\hspace{1em}}c@{\hspace{1em}}c@{\hspace{1em}}c@{\hspace{1em}}c@{\hspace{1em}}c@{\hspace{1em}}c@{\hspace{1em}}c}
      & \multicolumn{4}{c}{Training set (29 sequences)} &
          \multicolumn{4}{c}{Test set (30 sequences)}\\\hline
    & P & R & F & $N/65$ & P & R & F & $N/69$ \\\hline
    \cite{grundmann2010efficient} & 79.17 & 47.55 & 59.42 & 4 & 77.11
                & 42.99 & 55.20 & 5 \\
    \cite{ochs2014segmentation} & 81.50 & 63.23 & 71.21 & 16 & 74.91 &
                                                 60.14 & 66.72 & 20  \\
    \cite{taylor2015causal} & 85.00 & 67.99 & 75.55 & 21 & 82.37 &
              58.37 & 68.32 & 17 \\
    \cite{taylor2015causal}+backward & 83.00 & 70.10 & 76.01 & 23 & 77.94 &
               59.14 & 67.25 & 15 \\
      \cite{brox2015}-Mce S(8), p(.6) &86.91 & 71.33 & 78.35 &25  &
                                                                    87.57
                & {\bf 70.19} & {\bf 77.92} & 25\\
      \cite{brox2015}-Mce S(4), p(.5) & 86.79 & {\bf 73.36} & 79.51 &28  & 86.81 & 67.96& 76.24& 25\\
      \cite{brox2015}-Mce D(4), p(.5) &85.31 & 68.70 & 76.11 &24  &
                                                                    85.95 & 65.07& 74.07& 23\\
      non-ctf \cite{yang2015self} & 89.53 & 70.74 & 79.03 &
                                                    26
            & 91.47 & 64.75 & 75.82  & 27 \\
      ctf (ours) & {\bf 93.04} & 72.68 & {\bf 81.61} &
                                                 {\bf 29}
            & {\bf 95.94} &  65.54 & 77.87 & {\bf 28}
\end{tabular}
}

\caption{FBMS-59 results. Average precision (P), recall (R),
  F-measure (F), and number of objects detected (N) over all sequences
  in training and test datasets. Higher values indicate
  superior performance. \cite{taylor2015causal},
  \cite{yang2015self} and our method are frame-to-frame methods; other
  methods process the video in batch. All methods are fully
  automatic.
}
  \label{tab:FBMS59_quant}
\end{table}

\begin{figure}
  \centering
  \footnotesize
  Frames for Increasing Time $\rightarrow$\\
  \begin{tabular}{c@{}c@{}c@{}}
    \rotatebox{90}{non-ctf} & \hspace{3pt} &
    \includegraphics[width=.15\linewidth,height=.09\linewidth]{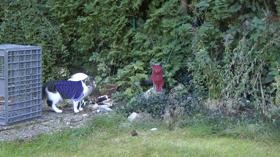}
  \includegraphics[width=.15\linewidth,height=.09\linewidth]{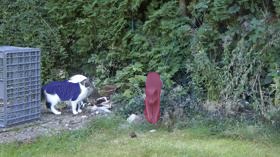}
  \includegraphics[width=.15\linewidth,height=.09\linewidth]{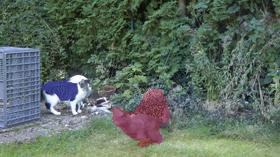}
  \includegraphics[width=.15\linewidth,height=.09\linewidth]{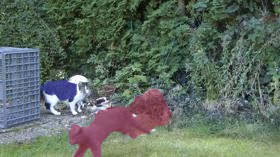}
  \includegraphics[width=.15\linewidth,height=.09\linewidth]{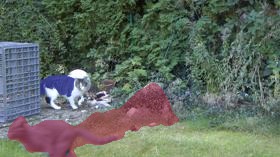}
  \includegraphics[width=.15\linewidth,height=.09\linewidth]{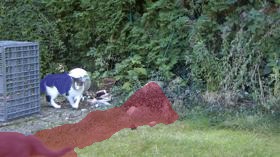}\\[-\dp\strutbox]
    \rotatebox{90}{\quad ours} & \hspace{3pt} &
    \includegraphics[width=.15\linewidth,height=.09\linewidth]{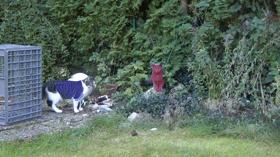}
  \includegraphics[width=.15\linewidth,height=.09\linewidth]{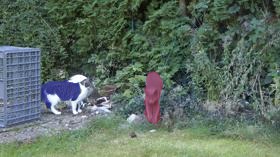}
  \includegraphics[width=.15\linewidth,height=.09\linewidth]{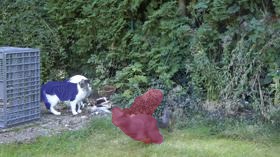}
  \includegraphics[width=.15\linewidth,height=.09\linewidth]{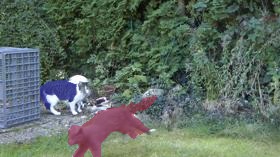}
  \includegraphics[width=.15\linewidth,height=.09\linewidth]{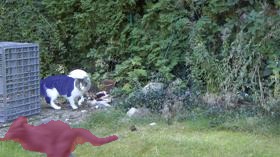}
  \includegraphics[width=.15\linewidth,height=.09\linewidth]{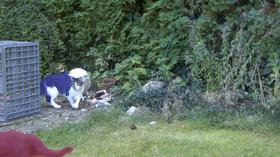}\\

    \rotatebox{90}{non-ctf} & \hspace{3pt} &
\includegraphics[width=.15\linewidth,height=.09\linewidth]{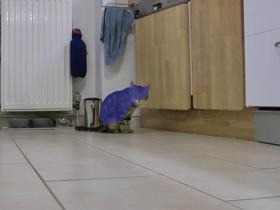}
\includegraphics[width=.15\linewidth,height=.09\linewidth]{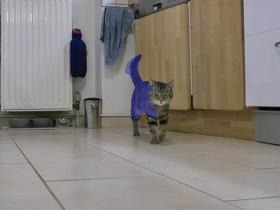}
\includegraphics[width=.15\linewidth,height=.09\linewidth]{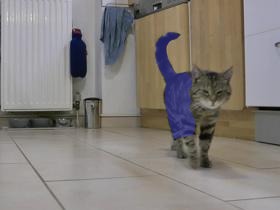}
\includegraphics[width=.15\linewidth,height=.09\linewidth]{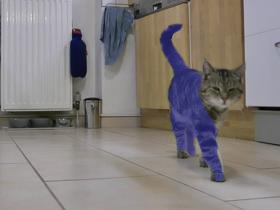}
\includegraphics[width=.15\linewidth,height=.09\linewidth]{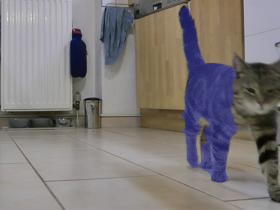}
\includegraphics[width=.15\linewidth,height=.09\linewidth]{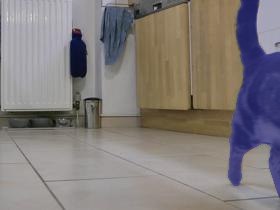}\\[-\dp\strutbox]
\rotatebox{90}{\quad ours} & \hspace{3pt} &
\includegraphics[width=.15\linewidth,height=.09\linewidth]{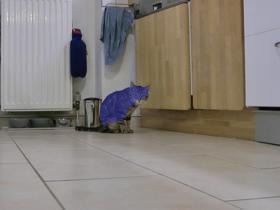}
\includegraphics[width=.15\linewidth,height=.09\linewidth]{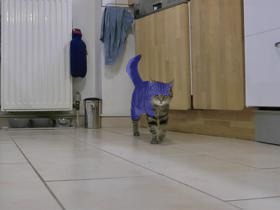}
\includegraphics[width=.15\linewidth,height=.09\linewidth]{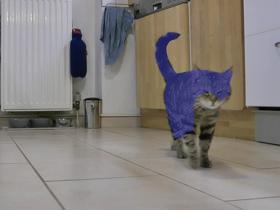}
\includegraphics[width=.15\linewidth,height=.09\linewidth]{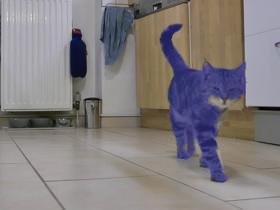}
\includegraphics[width=.15\linewidth,height=.09\linewidth]{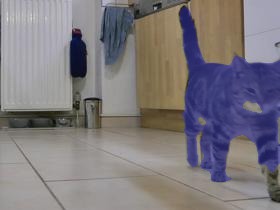}
\includegraphics[width=.15\linewidth,height=.09\linewidth]{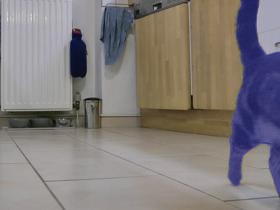}\\

\rotatebox{90}{non-ctf} & \hspace{3pt} &
\includegraphics[width=.15\linewidth,height=.09\linewidth]{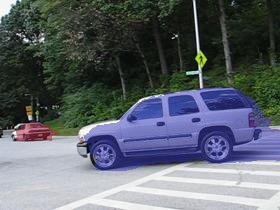}
\includegraphics[width=.15\linewidth,height=.09\linewidth]{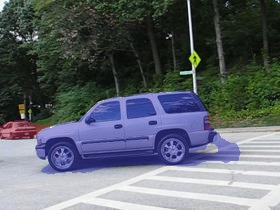}
\includegraphics[width=.15\linewidth,height=.09\linewidth]{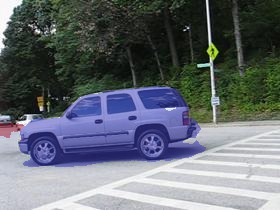}
\includegraphics[width=.15\linewidth,height=.09\linewidth]{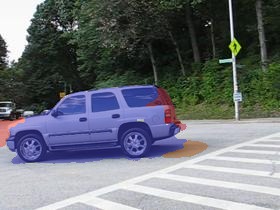}
\includegraphics[width=.15\linewidth,height=.09\linewidth]{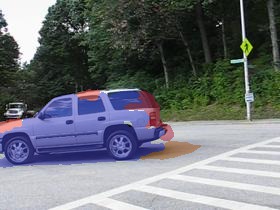}
\includegraphics[width=.15\linewidth,height=.09\linewidth]{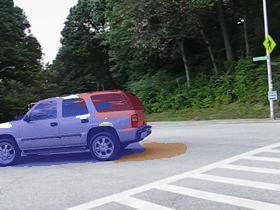}\\[-\dp\strutbox]
\rotatebox{90}{\quad ours} & \hspace{3pt} &
\includegraphics[width=.15\linewidth,height=.09\linewidth]{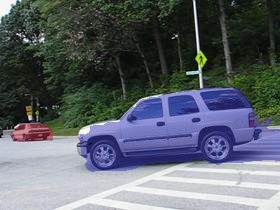}
\includegraphics[width=.15\linewidth,height=.09\linewidth]{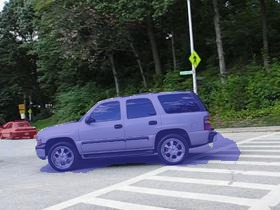}
\includegraphics[width=.15\linewidth,height=.09\linewidth]{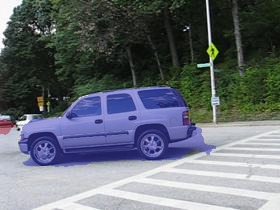}
\includegraphics[width=.15\linewidth,height=.09\linewidth]{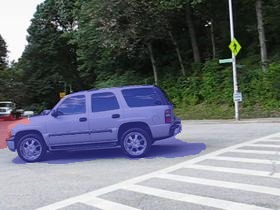}
\includegraphics[width=.15\linewidth,height=.09\linewidth]{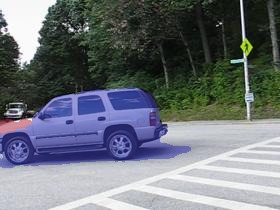}
\includegraphics[width=.15\linewidth,height=.09\linewidth]{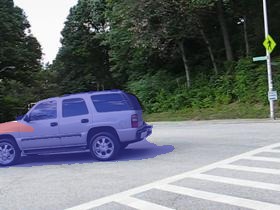}\\

\rotatebox{90}{non-ctf} & \hspace{3pt} &
\includegraphics[width=.15\linewidth,height=.09\linewidth]{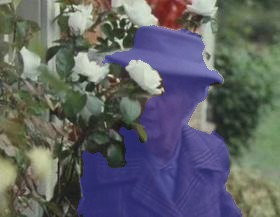}
\includegraphics[width=.15\linewidth,height=.09\linewidth]{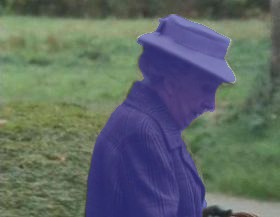}
\includegraphics[width=.15\linewidth,height=.09\linewidth]{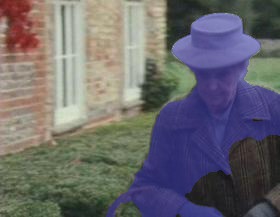}
\includegraphics[width=.15\linewidth,height=.09\linewidth]{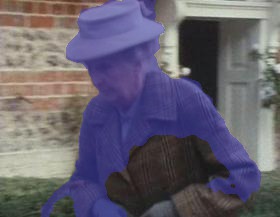}
\includegraphics[width=.15\linewidth,height=.09\linewidth]{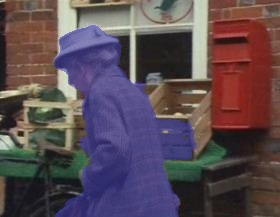}
\includegraphics[width=.15\linewidth,height=.09\linewidth]{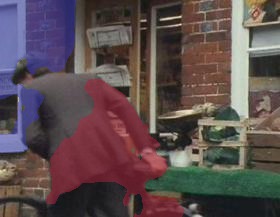}\\[-\dp\strutbox]
\rotatebox{90}{\quad ours} & \hspace{3pt} &
\includegraphics[width=.15\linewidth,height=.09\linewidth]{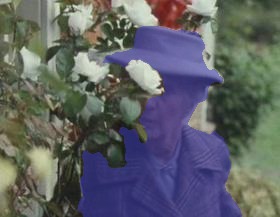}
\includegraphics[width=.15\linewidth,height=.09\linewidth]{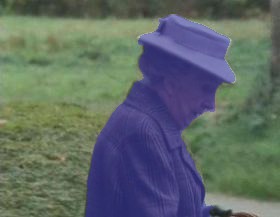}
\includegraphics[width=.15\linewidth,height=.09\linewidth]{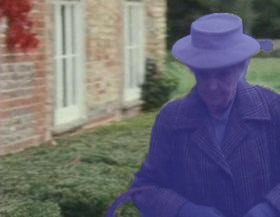}
\includegraphics[width=.15\linewidth,height=.09\linewidth]{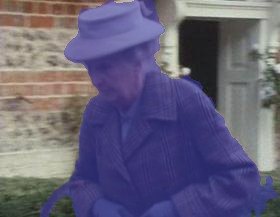}
\includegraphics[width=.15\linewidth,height=.09\linewidth]{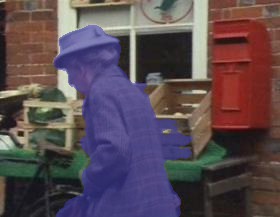}
\includegraphics[width=.15\linewidth,height=.09\linewidth]{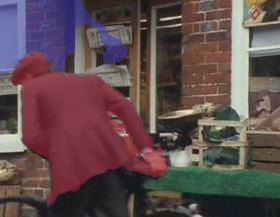}\\

\rotatebox{90}{non-ctf} & \hspace{3pt} &
\includegraphics[width=.15\linewidth,height=.09\linewidth]{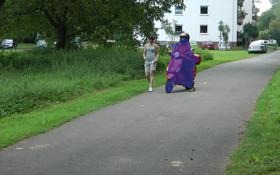}
\includegraphics[width=.15\linewidth,height=.09\linewidth]{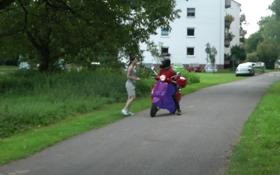}
\includegraphics[width=.15\linewidth,height=.09\linewidth]{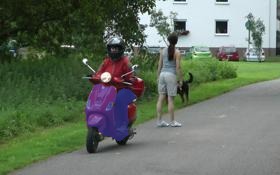}
\includegraphics[width=.15\linewidth,height=.09\linewidth]{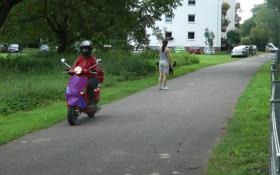}
\includegraphics[width=.15\linewidth,height=.09\linewidth]{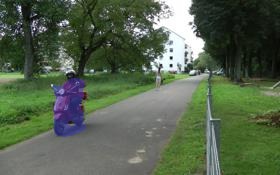}
\includegraphics[width=.15\linewidth,height=.09\linewidth]{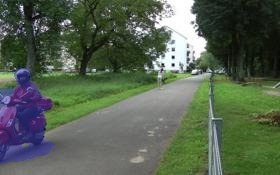}\\[-\dp\strutbox]
\rotatebox{90}{\quad ours} & \hspace{3pt} &
\includegraphics[width=.15\linewidth,height=.09\linewidth]{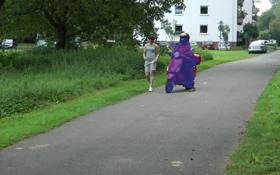}
\includegraphics[width=.15\linewidth,height=.09\linewidth]{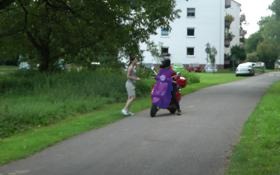}
\includegraphics[width=.15\linewidth,height=.09\linewidth]{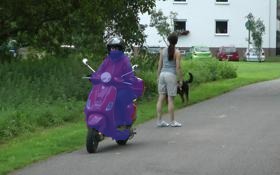}
\includegraphics[width=.15\linewidth,height=.09\linewidth]{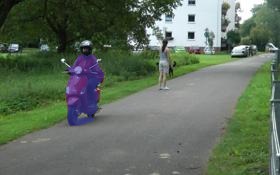}
\includegraphics[width=.15\linewidth,height=.09\linewidth]{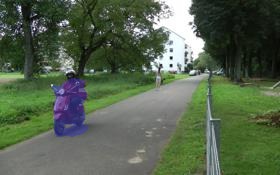}
\includegraphics[width=.15\linewidth,height=.09\linewidth]{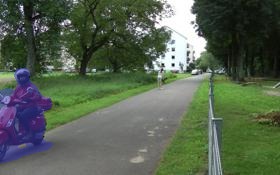}\\

\rotatebox{90}{non-ctf} & \hspace{3pt} &
\includegraphics[width=.15\linewidth,height=.09\linewidth]{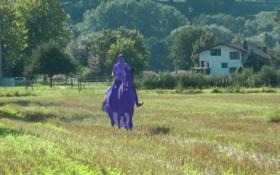}
\includegraphics[width=.15\linewidth,height=.09\linewidth]{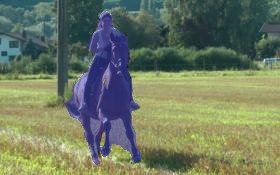}
\includegraphics[width=.15\linewidth,height=.09\linewidth]{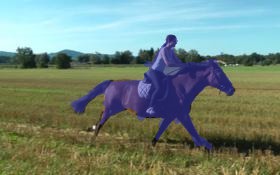}
\includegraphics[width=.15\linewidth,height=.09\linewidth]{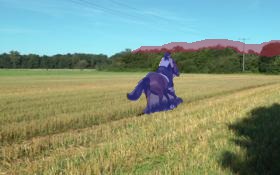}
\includegraphics[width=.15\linewidth,height=.09\linewidth]{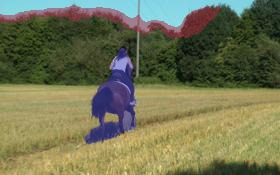}
\includegraphics[width=.15\linewidth,height=.09\linewidth]{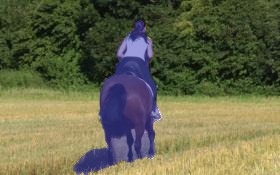}\\[-\dp\strutbox]
\rotatebox{90}{\quad ours} & \hspace{3pt} &
\includegraphics[width=.15\linewidth,height=.09\linewidth]{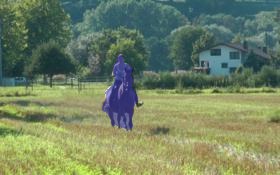}
\includegraphics[width=.15\linewidth,height=.09\linewidth]{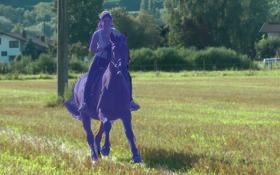}
\includegraphics[width=.15\linewidth,height=.09\linewidth]{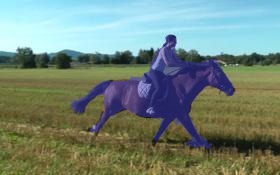}
\includegraphics[width=.15\linewidth,height=.09\linewidth]{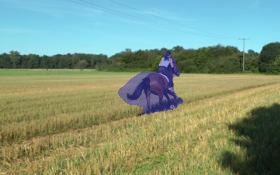}
\includegraphics[width=.15\linewidth,height=.09\linewidth]{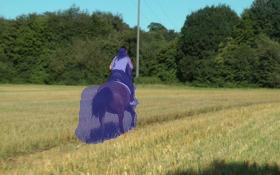}
\includegraphics[width=.15\linewidth,height=.09\linewidth]{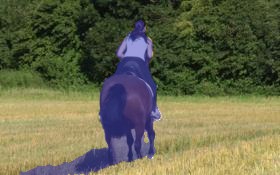}\\

\rotatebox{90}{non-ctf} & \hspace{3pt} &
\includegraphics[width=.15\linewidth,height=.09\linewidth]{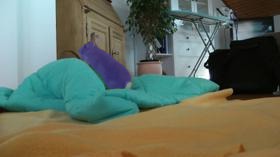}
\includegraphics[width=.15\linewidth,height=.09\linewidth]{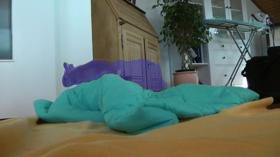}
\includegraphics[width=.15\linewidth,height=.09\linewidth]{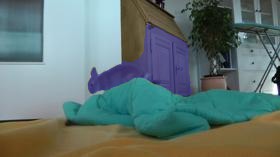}
\includegraphics[width=.15\linewidth,height=.09\linewidth]{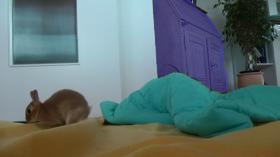}
\includegraphics[width=.15\linewidth,height=.09\linewidth]{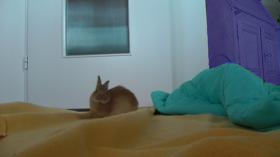}
\includegraphics[width=.15\linewidth,height=.09\linewidth]{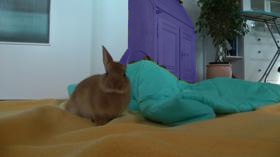}\\[-\dp\strutbox]
\rotatebox{90}{\quad ours} & \hspace{3pt} &
\includegraphics[width=.15\linewidth,height=.09\linewidth]{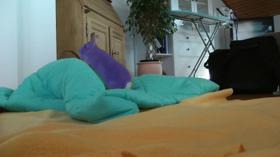}
\includegraphics[width=.15\linewidth,height=.09\linewidth]{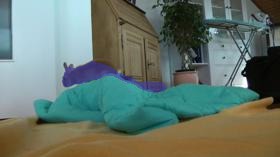}
\includegraphics[width=.15\linewidth,height=.09\linewidth]{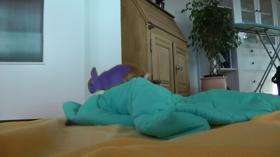}
\includegraphics[width=.15\linewidth,height=.09\linewidth]{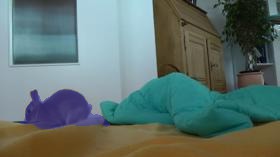}
\includegraphics[width=.15\linewidth,height=.09\linewidth]{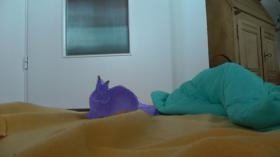}
\includegraphics[width=.15\linewidth,height=.09\linewidth]{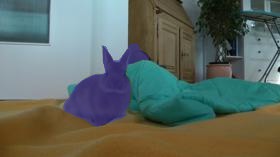}
  \end{tabular}
  \caption{Sample visual results on representative sequences for the
    FBMS-59 dataset (segmented objects in purple and red). The change
    of energy to integrate over all scales (our approach) is generally
    less sensitive to clutter than using an energy that contains only
    one scale (non-ctf).}
 \label{fig:FBMS59_qual}
\end{figure}

We test our method on a recent benchmark dataset that contains moving
objects: the Freiburg-Berkeley Motion Segmentation (FBMS-59)
\cite{ochs2014segmentation} dataset. FBMS-59 consists of two sets -
training, 29 sequences, and test, 30 sequences. Videos range between
19 and 800 frames, and have multiple objects.

\comment{
and the SegTrack v2

SegTrack v2 consists of 14 sequences ranging from 29-279
frames with multiple objects.
}

{\bf Evaluation}: FBMS-59 measures the accuracy versus ground truth of
a subset of frames, ranging from 3 to 41 for each video. The
segmentation is measured in terms of region metrics that are measured
using precision, recall, $F$ - measure, and the number of objects with
$F \geq 0.75$.

\comment{
SegTrack v2
evaluates measures the accuracy on all frames. Accuracy is measured in
terms of the average intersection over union overlap.
}

{\bf Comparison}: To demonstrate the advantage of our coarse-to-fine
energy over a corresponding single scale energy, we compare to
\cite{yang2015self}. Our approach replaces the single scale motion
term in \cite{yang2015self} with the coarse-to-fine energy described
in the previous section. Since we test on benchmarks, we also
compare to other state-of-the-art approaches, although our main
purpose is to show the improvements that occur by merely using our
coarse-to-fine energy.

{\bf Parameters}: We use the parameters in the method
\cite{yang2015self} provided in their online code, which are chosen
constant across all sequences and datasets. Our method requires one
parameter $\alpha$ in \eqref{eq:poisson_approx}. We choose it to be
$\alpha=20$ by selecting it based on a few sequences from the training
set. The parameters of \cite{yang2015self} are reported to be chosen
by using the training set of FBMS-59.

{\bf Results on FBMS-59}: Figure~\ref{fig:FBMS59_qual} shows some
representative visual results of our method and \cite{yang2015self}.
Table~\ref{tab:FBMS59_quant} shows quantitative results of the two
approaches, as well as other state-of-the-art methods. Note our method
is initialized as in \cite{yang2015self} with a clustering of
optical flow over several frames.  Visual results show our approach
generally avoids distracting clutter and thus prevents leakages in
comparison to \cite{yang2015self}. In many cases, it also captures
more of the object. The latter can be attributed to the fact that our
approach makes the motion signal more reliable, and thus it is used
more frequently than \cite{yang2015self}, which switches to color
histogram segmentation when motion cues are unreliable. Quantitative
results show that we improve the F-measure of \cite{yang2015self} by
about $2\%$ on both training and test sets, and that we increase the
number of objects detected. We also have highest F-measure on the
training set of all competing methods. On the test set, we are
out-performed by \cite{brox2015} by a slim margin of $0.05\%$
F-measure, but we do detect more objects. Note however,
\cite{brox2015} and our method are not directly comparable since in
\cite{brox2015} the video is processed in batch, whereas our method
processes the video frame-by-frame.

{\bf Computational cost}: The processing required for our approach is
small compared with the overall cost of \cite{yang2015self}.  Our
approach requires the solution of a linear equation
\eqref{eq:poisson_eqn} at each update of the regions, but the solution
does not change much between updates and so the solution from the
previous iteration provides a good initialization to the current
iteration. Our approach adds about 5 secs per frame to the total time
on average of about 30 secs per frame by \cite{yang2015self} on a
12-core processor.

\section{Conclusion}
We have presented a general energy that reformulates conventional data
terms in segmentation problems. This novel energy incorporates scale
space and consists of two important properties: scales spaces are
defined on regions, so that structures in different segments are not
blurred across nor displaced, and it exhibits a coarse-to-fine
property. The latter favors that the coarse structure of the desired
segmentation is obtained while finer structure becomes successively
obtained, without having to rely on heuristics to maintain that finer
solutions remain close to the coarse solution. Our method is based on
the Heat Equation defined on regions, and this equation was
demonstrated to have the desired properties. We
have shown application to the problem of motion segmentation, where
relying on data from a single scale is often unreliable. In this case,
other information such as color histograms must be used. However, it
is often difficult to obtain objects with complex appearance by using
color histograms, thus improving reliability of the motion signal
segments objects. Experiments on benchmark datasets have shown that
our technique improves an existing segmentation method merely by
replacing the data term of the motion residual at a single scale with
an integration over a continuum of scales. In particular, we observe
that our technique is less sensitive to clutter, and many times
increases the recall by relying more on motion cues.

\comment{Interestingly, the scale
space need not be computed to approximately optimize the energy.}

\appendix

\section{Proofs of Lemmas and Propositions}

\setcounter{lemma}{0}
\setcounter{proposition}{0}

\begin{lemma} Suppose $I : \R^2 \to \R$ and $a = \mean{I} = \mean{ u(t,\cdot) }$. Then 
  \begin{equation}
    E =  \int_0^{\infty}\int_{\R^2} |u(t,x)-a|^2 \ud x\ud t =
    \int_{\R^2} |H(\omega)\hat I(\omega)|^2 \ud \omega, \,\,
    \mbox{where } H(\omega) = \frac{1}{\sqrt{2} |\omega|}, 
  \end{equation}
  where $\hat I$ denotes the Fourier transform, and $\omega$ denotes frequency.
\end{lemma}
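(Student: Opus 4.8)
The plan is to pass to the Fourier domain, where both the Heat Equation and the integration over the scale parameter become elementary. Throughout I would use the unitary Fourier transform on $\R^2$, $\hat f(\omega) = \frac{1}{2\pi}\int_{\R^2} f(x) e^{-ix\cdot\omega}\ud x$, so that Plancherel's identity holds with no extra constant and $\widehat{\Delta f}(\omega) = -|\omega|^2 \hat f(\omega)$; the precise form $H(\omega)=1/(\sqrt 2\,|\omega|)$ is tied to this normalization, and with a different convention one would pick up the usual powers of $2\pi$.

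First I would solve the Heat Equation \eqref{eq:heat_eqn} on $R=\R^2$ (where the Neumann boundary condition is vacuous) by transforming in $x$: for each fixed $\omega$ the PDE collapses to the scalar ODE $\partial_t \hat u(t,\omega) = -|\omega|^2\hat u(t,\omega)$ with $\hat u(0,\omega)=\hat I(\omega)$, hence $\hat u(t,\omega) = e^{-t|\omega|^2}\hat I(\omega)$. The subtraction of $a$ touches only the zero frequency: since the average of an $L^1$ function over the infinite-measure domain $\R^2$ is $0$, here $a=\mean{I}=0$, and $\widehat{u(t,\cdot)-a}(\omega)=e^{-t|\omega|^2}\hat I(\omega)$ for a.e.\ $\omega$ (the single point $\omega=0$ carries no mass). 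Applying Plancherel in $x$ at each fixed $t$ then gives $\int_{\R^2}|u(t,x)-a|^2\ud x = \int_{\R^2} e^{-2t|\omega|^2}|\hat I(\omega)|^2\ud\omega$.

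Next I would integrate over $t\in[0,\infty)$. Since the integrand is nonnegative, Tonelli's theorem permits exchanging the order of integration, and the inner integral is $\int_0^\infty e^{-2t|\omega|^2}\ud t = \frac{1}{2|\omega|^2}$. This yields $E = \int_{\R^2}\frac{1}{2|\omega|^2}|\hat I(\omega)|^2\ud\omega = \int_{\R^2}|H(\omega)\hat I(\omega)|^2\ud\omega$ with $H(\omega)=\frac{1}{\sqrt 2\,|\omega|}$, which is the claim; and the comparison case "$H=1$" (no scale integration) is immediate since then one just drops the $t$-integral and applies Plancherel once.

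The one genuinely delicate point, and the place I would be careful, is integrability near $\omega=0$: writing $\ud\omega=r\ud r\ud\theta$ in $\R^2$, the weight $1/|\omega|^2$ makes $|\hat I(\omega)|^2/|\omega|^2$ behave like $|\hat I|^2/r$ near the origin, so $E$ is finite only when $\hat I$ vanishes at $0$, i.e.\ $\int_{\R^2}I=0$. This is exactly the regime in which subtracting $a$ is harmless, and it is consistent with the role of this lemma as making the coarse-scale-preference intuition precise rather than treating the general $R=\R^2$ data term. I would therefore state up front the standing assumptions — $I\in L^1\cap L^2$ with $\int I=0$ and, say, $\hat I$ bounded (or Hölder) near the origin — under which solving the $\omega$-wise ODE, Plancherel, and Tonelli are all legitimate, and then the chain of equalities above is routine.
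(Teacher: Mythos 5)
Your proof is correct and follows essentially the same route as the paper's: Fourier-transform the Heat Equation to get $\hat u(t,\omega)=e^{-t|\omega|^2}\hat I(\omega)$, note $a=0$, apply Plancherel, and exchange the order of integration to evaluate $\int_0^\infty e^{-2t|\omega|^2}\ud t = 1/(2|\omega|^2)$. Your added care about the normalization convention and the integrability of $|\hat I(\omega)|^2/|\omega|^2$ near $\omega=0$ is a genuine refinement the paper glosses over, but it does not change the argument.
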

\begin{proof}
  Taking the Fourier transform of the Heat Equation:
  \[
  \begin{cases}
    \partial_t u(t,x) = \Delta u(t,x) & x \in \R^2  \\
    u(0,x) = I(x) & t=0
  \end{cases}
  \]
  yields:
  \[
  \partial_t \hat u(t,\omega) = (i\omega)\cdot (i\omega) \hat u(t,\omega) = 
  -|\omega|^2 \hat u(t,\omega),
  \]
  where $\hat u(t,\omega)$ is the Fourier transform of $u$. Solving this differential equation yields
  \[
  \hat u(t,\omega) = e^{-|\omega|^2 t}\hat I(\omega).
  \]
  We note that $a = 0$ when $I \in \mathbb{L}^2$ since $\hat I(0) = \int_{\R^2} I(x) \ud x$ is finite. Then by Parseval's Theorem,
  \begin{align}
    E &= \int_0^{\infty} \int_{\R^2} |u(t,x)-a|^2 \ud x \ud t = 
        \int_0^{\infty} \int_{\R^2} |\hat u(t,\omega)|^2 \ud \omega \ud t \\
      &= \int_{\R^2} \int_0^{\infty} e^{-2|\omega|^2 t} \ud t \cdot |\hat
        I(\omega)|^2 \ud \omega \\
      &= \int_{\R^2 } 
        \left. -\frac{1}{2|\omega|^2} e^{-2|\omega|^2 t}
        \right|_{t=0}^{t=+\infty} |\hat I(\omega)|^2 \ud \omega = 
        \int_{\R^2 } \frac{1}{2|\omega|^2} |\hat
        I(\omega)|^2 \ud \omega \\
    &= \int_{\R^2 } |A(\omega) \hat I(\omega)|^2 \ud \omega
  \end{align}
  where $A(\omega) = 1/(\sqrt{2} |\omega|)$.
\end{proof}

\begin{lemma}
  The Lagrange multiplier $\lambda$ satisfies the following Heat
  Equation with forcing term, evolving backwards in time: 
  \begin{equation}
    \begin{cases}
      \partial_t \lambda(t,x) + \Delta \lambda(t,x) = f'(u(t,x)) & x\in R\times
      [0,T] \\
      \nabla \lambda(t,x) \cdot N = 0 & x\in \partial R \times [0,T] \\
      \lambda(T,x) = 0 & x\in R
    \end{cases}.
  \end{equation}
  The solution of this equation can be expressed with Duhamel's
  Principle \cite{evans2010partial} as
  \begin{equation}
    \lambda(t,x) = -\int_t^T F(s-t, x; s) \ud s.
  \end{equation}
  where $F(\cdot,\cdot ; s) : [0,T] \times R \to \R$ is the solution
  of the forward heat equation Eqn.~(1) (in the paper) with zero forcing
  and initial condition $f'(u)$ evaluated at time $s$, i.e.,
  \begin{equation}
    \begin{cases}
      \partial_t F(t,x; s) - \Delta F(t,x; s) = 0 & (t,x) \in [0,T] \times R\\
      \nabla F(t,x; s) \cdot N = 0 & x\in [0,T] \times \partial R\\
      F(0,x; s) = f(u(s,x)) & x\in R
    \end{cases}.
  \end{equation}

In the case that $f(u) =(u-a)^2$, $\lambda$ can be expressed as
\begin{equation} \label{eq:lambda_explicit_2}
  \lambda(t,x) = -2\int_t^T ( u(2s-t,x) - a ) \ud s.
\end{equation}

\end{lemma}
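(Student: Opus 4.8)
\emph{Step 1: the backward equation.} The plan is to obtain the equation for $\lambda$ as the first-order optimality condition in $u$ for the Lagrangian in \eqref{eq:energy_lagrange}. I would vary $E$ with respect to $u$ along an admissible perturbation $h(t,x)$; since the initial condition $u(0,\cdot)=I$ is prescribed and not subject to variation, admissibility forces $h(0,\cdot)=0$. Differentiating gives
\[
\delta E = \int_0^T\!\!\int_R f'(u)\,h\ud x\ud t + \int_0^T\!\!\int_R\left(\nabla\lambda\cdot\nabla h + \lambda\,\partial_t h\right)\ud x\ud t .
\]
Integrating by parts in $x$ (Green's identity) and in $t$, and using $h(0,\cdot)=0$, the boundary terms reduce to one supported on $\partial R\times[0,T]$ (carrying the factor $\nabla\lambda\cdot N$) and a terminal term $\int_R\lambda(T,\cdot)\,h(T,\cdot)\ud x$. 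Requiring $\delta E=0$ for every admissible $h$ yields exactly the three stated conditions: $\partial_t\lambda+\Delta\lambda=f'(u)$ on $R\times[0,T]$, $\nabla\lambda\cdot N=0$ on $\partial R\times[0,T]$, and $\lambda(T,\cdot)=0$.

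\emph{Step 2: time reversal and Duhamel.} Setting $\mu(\tau,x)=\lambda(T-\tau,x)$ converts the backward problem into the forward problem $\partial_\tau\mu-\Delta\mu=-f'\!\left(u(T-\tau,\cdot)\right)$ with homogeneous Neumann data and $\mu(0,\cdot)=0$. By Duhamel's principle its solution is the superposition, over source times $r$, of solutions of the homogeneous Neumann heat equation with datum $-f'(u(T-r,\cdot))$ propagated for time $\tau-r$. Undoing $\tau=T-t$ and reindexing the source variable by $s=T-r$ turns the propagation time $\tau-r$ into $s-t$, and produces $\lambda(t,x)=-\int_t^T F(s-t,x;s)\ud s$ with $F(\cdot,\cdot\,;s)$ the solution of \eqref{eq:heat_eqn} with initial datum $f'(u(s,\cdot))$. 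Here $u$ must be read as extended to all times $t\ge 0$ by continuing to solve the heat equation, since $s-t$ can exceed $T$; this is the same extension already used implicitly in \eqref{eq:grad_simple}.

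\emph{Step 3: the quadratic case.} For $f(u)=(u-a)^2$ we have $f'(u)=2(u-a)$, so by linearity $F(\sigma,x;s)=2\bigl(e^{\sigma\Delta}(u(s,\cdot)-a)\bigr)(x)$, where $e^{\sigma\Delta}$ denotes the Neumann heat semigroup on $R$. Constants belong to the kernel of the Neumann Laplacian, hence $e^{\sigma\Delta}a=a$; and since $u$ itself solves \eqref{eq:heat_eqn}, the semigroup property gives $e^{\sigma\Delta}u(s,\cdot)=u(s+\sigma,\cdot)$. Therefore $F(\sigma,x;s)=2\bigl(u(s+\sigma,x)-a\bigr)$, and substituting $\sigma=s-t$ into the Duhamel formula yields $\lambda(t,x)=-2\int_t^T\bigl(u(2s-t,x)-a\bigr)\ud s$, which is \eqref{eq:lambda_explicit_2}.

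The main obstacle is not a single calculation but getting two bookkeeping points right: the admissibility condition $h(0,\cdot)=0$, which is what converts an initial condition into the terminal condition $\lambda(T,\cdot)=0$ for $\lambda$; and the change of variables in the Duhamel integral that brings it into the stated form. The semigroup identities used in Step~3 are standard for the Neumann heat flow on a bounded domain.
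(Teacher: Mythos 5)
Your proof is correct and follows essentially the same route as the paper's: the backward equation arises as the first-order optimality condition in $u$ with the admissibility constraint $\tilde u(0,\cdot)=0$ forcing the terminal condition $\lambda(T,\cdot)=0$, Duhamel's principle gives the superposition formula, and the semigroup property of the Neumann heat flow (together with $e^{\sigma\Delta}a=a$) yields the explicit expression in the quadratic case. Your explicit time reversal in Step~2 is just a cleaner bookkeeping of the paper's delta-function decomposition of the forcing term, and your care with the constant $a$ and with extending $u$ to times up to $2T$ tidies up points the paper leaves implicit.
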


\begin{proof}
  We define
  \begin{equation} \label{eq:energy_lagrange_2}
    E(R, u, \lambda) = \int_R \int_0^T f(u) \ud x \ud t +
    \int_R \int_0^T \left( \nabla \lambda \cdot \nabla u +
      \lambda \cdot u_t \right) \ud x \ud t.
  \end{equation}
  Integrating by parts, we have that
  \begin{align}
    E &= \int_{R\times [0,T] } \left[ f(u) - (\partial_t \lambda + \Delta
        \lambda) u \right] \ud x \ud t \\
      &+ \int_R \left. \lambda u \right|_{t=0}^{t=T} \ud x +
        \int_{\partial R \times [0,T] } 
        (\nabla \lambda \cdot N)u
        \ud s(x) \ud t,
  \end{align}
  where $\ud s$ denotes the arc-length measure of $\partial R$, and $N$ is the unit outward normal of $\partial R$. Differentiating $E$ in the direction (perturbation) $\tilde u$ of $u$ evaluated at $u$ yields
  \begin{align}
    \ud E(u) \cdot \tilde u  &= 
     \int_{R\times [0,T] } \left[ f'(u) -(\partial_t \lambda + \Delta
     \lambda)  \right] \tilde u\ud x \ud t  \\
    &+ \int_R \left. \lambda \tilde u \right|_{t=0}^{t=T} \ud x + 
      \int_{\partial R \times [0,T] } 
      (\nabla \lambda \cdot N)\tilde u
      \ud s(x) \ud t.
  \end{align}
  Note that $\tilde u (0) = 0$ since $u(0) = I$ is fixed and thus may not be perturbed. We may choose $\nabla \lambda \cdot N = 0$ on $\partial R$ and $\lambda(T) = 0$. We are interested in $u$ such that $\ud E(u) \cdot \tilde u = 0$ for all $\tilde u$. This yields the condition that 
  \begin{equation}
    \begin{cases}
      \partial_t \lambda(t,x) + \Delta \lambda(t,x) = f'(u(t,x)) & x\in R\times
      [0,T] \\
      \nabla \lambda(t,x) \cdot N = 0 & x\in \partial R \times [0,T] \\
      \lambda(T,x) = 0 & x\in R
    \end{cases}.
  \end{equation}
  To express the solution to the above equation in a more convenient form, we may use Duhamel's Principle. The latter states that a linear PDE with forcing term $\delta(t-s)$ is equivalent to the same PDE with zero forcing and initial condition at $s$ of $1$. We may express the forcing term as $\int_{ [0,T] } f(u(x,s)) \delta(s-t)\ud s $, and thus combining linearity of the PDE with Duhamel's Principle yields that 
  \begin{equation} \label{eq:lambda}
    \lambda(t,x) = -\int_t^T F(s-t, x; s) \ud s,
  \end{equation}
  i.e., it is the sum of solutions of the PDE with zero forcing and initial condition $f(u(x,s))$ at time $s$, specifically,
  \begin{equation}
    \begin{cases}
      \partial_t F(t,x; s) - \Delta F(t,x; s) = 0 & (t,x) \in 
      [0,T] \times R\\
      \nabla F(t,x; s) \cdot N = 0 & (t,x) \in [0,T] \times \partial R\\
      F(0,x; s) = f(u(s,x)) & x\in R
    \end{cases}.
  \end{equation}
  
  In the case that $f(u) = (u-a)^2$ then $f'(u) = 2(u-a)$, the PDE for $F$ becomes
  \begin{equation}
    \begin{cases}
      \partial_t F(t,x; s) = \Delta F(t,x; s) & x\in R\times
      [0,T] \\
      \nabla F(t,x; s) \cdot N = 0 & x\in \partial R \times [0,T] \\
      F(0,x; s) = 2(u(s,x)-a) & x\in R
    \end{cases},
  \end{equation}
  which is the forward Heat Equation with initial condition being the solution of the same Heat Equation evaluated at time $s$. By the semi-group property of the Heat Equation, we have that
  \begin{equation}
    F(t,x; s) = 2( u(s+t,x)-a ),
  \end{equation}
  and therefore using \eqref{eq:lambda}, 
  \begin{equation}
    \lambda(t,x) = -2\int_t^T ( u(s + s - t,x ) - a ) \ud s = -2\int_t^T ( u(2s - t,x ) - a )\ud s.
  \end{equation}
\end{proof}

\begin{proposition}
  The gradient of $E$ with respect to the boundary $\partial R$ can be expressed as
  \begin{equation} \label{eq:grad_energy_2}
    \nabla_{\partial R} E = \int_0^T
    \left[ f(u) + \nabla \lambda \cdot \nabla u +
      \lambda \partial_t u \right] \ud t \cdot N,
  \end{equation}
  where $N$ is the normal vector to $\partial R$. In the case that $f(u) = (u-a)^2$ and as $T$ gets large, the gradient approaches
  \begin{equation} \label{eq:grad_simple_2}
    \nabla_{\partial R} E = \left(-\frac 1 2 |\nabla \lambda(0) |^2 -
      \lambda(0)[u(0)-a]\right) N,  \quad
    \lambda(0,x) = -\int_0^{2T} (u(s,x) -a) \ud s,
  \end{equation}
  where $\lambda(0)$ and $u(0)$ denote the functions $\lambda$ and $u$ at
  time zero.
\end{proposition}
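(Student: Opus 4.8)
The argument splits into the general gradient formula \eqref{eq:grad_energy_2} and its specialization \eqref{eq:grad_simple_2}. For the first part I would compute the \emph{shape} (domain) derivative of the Lagrangian \eqref{eq:energy_lagrange_2}. Write $E=\int_R \Phi(x)\,\ud x$ with $\Phi(x)=\int_0^T\left[f(u)+\nabla\lambda\cdot\nabla u+\lambda\partial_t u\right]\ud t$, and perturb $\partial R$ by a normal velocity field $V$. By the Hadamard/Reynolds transport formula the derivative equals $\int_{\partial R}\Phi\,(V\cdot N)\,\ud s$ plus an interior term that gathers the variations of $\Phi$ induced by the (domain-dependent) perturbations of $u$ and $\lambda$. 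Exactly as in the derivation of Lemma~2, the $\lambda$-variation merely reproduces the heat-equation constraint, which $u$ satisfies, and so contributes nothing; the $u$-variation is annihilated because $\lambda$ solves the adjoint equation with $\nabla\lambda\cdot N=0$ and $\lambda(T)=0$ (here one uses that $u(0)=I$ is held fixed, so admissible perturbations of $u$ vanish at $t=0$). Only the boundary integral remains, and reading off the $L^2(\partial R)$-gradient gives \eqref{eq:grad_energy_2}.

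For \eqref{eq:grad_simple_2}, set $w:=u-a$ and insert the explicit multiplier of Lemma~2, which after the substitution $r=2s-t$ becomes $\lambda(t,x)=-\int_t^{2T-t}w(r,x)\,\ud r$; in particular $\lambda(0,x)=-\int_0^{2T}w(s,x)\,\ud s$ and $\lambda(T,\cdot)=\nabla\lambda(T,\cdot)=0$. Differentiating under the integral sign gives the two identities $\partial_t\lambda(t,x)=w(t,x)+w(2T-t,x)$ and $\partial_t\nabla\lambda(t,x)=\nabla u(t,x)+\nabla u(2T-t,x)$. Evaluating the integrand of \eqref{eq:grad_energy_2} on $\partial R$ with $f(u)=w^2$, I would: (i) integrate $\int_0^T\lambda\,\partial_t u\,\ud t$ by parts in $t$ (the $t=T$ endpoint vanishes) and use the first identity, producing $-\lambda(0)[u(0)-a]-\int_0^T w^2\,\ud t-\int_0^T w(2T-t)w(t)\,\ud t$; (ii) substitute the second identity into $\int_0^T\nabla\lambda\cdot\nabla u\,\ud t$ and use $\int_0^T\nabla\lambda\cdot\partial_t\nabla\lambda\,\ud t=\tfrac12|\nabla\lambda|^2\big|_0^T=-\tfrac12|\nabla\lambda(0)|^2$. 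The $\int_0^T w^2$ from step (i) cancels the data term $f(u)=w^2$ exactly, leaving
\[
  \int_0^T\!\!\left[f(u)+\nabla\lambda\cdot\nabla u+\lambda\partial_t u\right]\ud t
  =-\tfrac12|\nabla\lambda(0)|^2-\lambda(0)\,[u(0)-a]-\mathcal E_T,
\]
with $\mathcal E_T:=\int_0^T \nabla\lambda(t)\cdot\nabla u(2T-t)\,\ud t+\int_0^T w(2T-t)\,w(t)\,\ud t$.

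Finally I would show $\mathcal E_T\to 0$ as $T\to\infty$. Since $R$ is bounded and $\lambda,u$ satisfy Neumann conditions, the spectral gap $\mu_1>0$ of the Neumann Laplacian yields exponential decay $|w(t,x)|,|\nabla u(t,x)|\lesssim e^{-\mu_1 t}$, hence $w(\cdot,x),\nabla u(\cdot,x)\in L^1([0,\infty))$; moreover $|\nabla\lambda(t,x)|\le\int_t^{\infty}|\nabla u(r,x)|\,\ud r$, so by Fubini $\int_0^T|\nabla\lambda(t,x)|\,\ud t$ is bounded uniformly in $T$. In $\mathcal E_T$ the factors $w(2T-t,\cdot)$ and $\nabla u(2T-t,\cdot)$ are sampled at times in $[T,2T]$, so their suprema over $t\in[0,T]$ go to $0$; combined with the uniform integrability above, both summands of $\mathcal E_T$ vanish in the limit. (The same decay gives $\int_0^T w(t,x)\,\ud t\to -\lambda(0,x)\big|_{T=\infty}$, consistent with simply keeping the stated $\lambda(0,x)=-\int_0^{2T}w(s,x)\,\ud s$ at finite $T$.) The one genuinely delicate step is making Part~1 rigorous: transporting $u$ and $\lambda$ to a fixed reference domain and verifying that the shape derivative of $u$ inherits the vanishing initial datum, so that the adjoint equation really does kill the interior variation — the manipulations in Part~2 are then routine bookkeeping together with the elementary parabolic decay estimate just described.
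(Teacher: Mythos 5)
Your proposal is correct, and while Part 1 matches the paper (the paper simply invokes the classical region-integral shape-derivative result and the Lagrange-multiplier decoupling, whereas you sketch the underlying Hadamard/adjoint-state argument and honestly flag the delicate point about the shape derivative of $u$ inheriting the vanishing initial datum — a point the paper glosses over), Part 2 takes a genuinely different route. The paper first passes to the limiting multiplier $\lambda(t,x)=-\int_t^\infty (u(\tau,x)-a)\ud\tau$, so that $\partial_t\lambda=u-a$ exactly, and then evaluates $\int_0^\infty\nabla\lambda\cdot\nabla u\ud t$ by the symmetrization trick $\int_0^\infty\!\int_t^\infty=\tfrac12\int_0^\infty\!\int_0^\infty$, obtaining $-\tfrac12\bigl|\int_0^\infty\nabla u\ud t\bigr|^2=-\tfrac12|\nabla\lambda(0)|^2$; this is clean but formal, since the interchange of the limit $T\to\infty$ with the integrations by parts is not justified. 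You instead keep $T$ finite, use the exact multiplier $\lambda(t,x)=-\int_t^{2T-t}(u(r,x)-a)\ud r$ with the exact identities $\partial_t\lambda=w(t)+w(2T-t)$ and $\partial_t\nabla\lambda=\nabla u(t)+\nabla u(2T-t)$, and integrate $\nabla\lambda\cdot\partial_t\nabla\lambda$ exactly to produce $-\tfrac12|\nabla\lambda(0)|^2$ plus an explicit remainder $\mathcal E_T$, which you then kill via the spectral gap of the Neumann Laplacian (the factors sampled at times in $[T,2T]$ decay like $e^{-\mu_1 T}$ against uniformly $L^1$ partners). Your version buys a quantitative, rigorous statement of what the paper's ``as $T$ gets large'' means, at the price of the extra bookkeeping and a mild regularity assumption on $R$ needed for the parabolic decay estimate; the paper's version buys brevity and the elegant symmetry identity, which your computation replaces with the equally elementary observation $\nabla\lambda\cdot\partial_t\nabla\lambda=\tfrac12\partial_t|\nabla\lambda|^2$. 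Both yield \eqref{eq:grad_simple_2}.
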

\begin{proof}
  To compute the gradient of $E$, we compute the gradient of $E$ in \eqref{eq:energy_lagrange_2} with respect to $\partial R$ treating $\lambda$ and $u$ independent of $R$ as in the theory of Lagrange multipliers. In this case, this is just a classical result in the calculus of variations (e.g., \cite{zhu1996region}), in particular the integrand (with respect to $R$) is multiplied by the outward normal along $\partial R$ to obtain the gradient:
  \begin{equation}
    \nabla_{\partial R} E = \int_0^T
    \left[ f(u) + \nabla \lambda \cdot \nabla u +
      \lambda \partial_t u \right] \ud t \cdot N.
  \end{equation}  
  
  Note that using a change of variables $\tau = 2s - t$, we may write
  $\lambda$ in \eqref{eq:lambda_explicit_2} as
  \begin{equation}
    \lambda(t,x) = -\int_{t}^{2T-t} ( u(\tau,x) - a ) \ud \tau, \quad t\in[0,T],
  \end{equation}
  as in \eqref{eq:grad_simple_2}. 

  If $f(u) = (u-a)^2$, then we may write $\nabla E = F N$ where 
  \begin{equation}
  F = \int_0^T (u-a)^2 + \nabla \lambda \cdot \nabla u +
  \lambda \partial_tu \ud t.
  \end{equation}
  Integrating by parts in $t$ yields
  \begin{equation}
    F = \int_0^T (u-a)^2 + \nabla \lambda \cdot \nabla u -
    \partial_t \lambda u \ud t -\lambda(0) u(0).
  \end{equation}  
  If we let $T\to\infty$, then 
  \begin{equation}
  \lambda(t,x) = -\int_{t}^{\infty} ( u(\tau,x) - a ) \ud \tau,
\end{equation}
  and so 
  \begin{align}
    F &= \int_0^{\infty} (u-a)^2 + \nabla \lambda \cdot \nabla u -
        u(u-a_i) \ud t -\lambda(0) u(0) \\
      & = 
        \int_0^{\infty} -a(u-a) + \nabla \lambda \cdot \nabla u \ud t - 
        \lambda(0) u(0),
  \end{align}
  where we used integration by parts and noted that
  $\lambda(\infty)=0$. Therefore,
  \begin{equation} \label{eq:F_simp}
  F =  \int_0^{\infty} \nabla \lambda \cdot \nabla u \ud t -
  \lambda(0)( u(0) - a ),
  \end{equation}
  by noting the first term is $a\lambda(0)$. We may now simplify the integral above:
  \begin{align}
    \int_0^{\infty} \nabla \lambda(t,x) \cdot \nabla u(t,x) \ud t &= -\int_0^{\infty}
    \int_{t}^{\infty} \nabla u(\tau,x) \cdot \nabla u(t,x) \ud \tau \ud t \\
     &= 
       -\frac 1 2 \int_0^{\infty}
       \int_{0}^{\infty} \nabla u(\tau,x) \cdot \nabla u(t,x)\ud \tau \ud t \\
    &=
      -\frac 1 2 \int_0^{\infty} \nabla u(\tau,x) \ud \tau \cdot 
      \int_{0}^{\infty} \nabla u(t,x)\ud t
  \end{align}
  where we have used symmetry of the integrand in the second line above. Therefore, 
  \begin{equation}
  \int_0^{\infty} \nabla \lambda(t,x) \cdot \nabla u(t,x) \ud t = -\frac 1 2
  \left| \int_0^{\infty} \nabla u(t,x) \ud t \right|^2 = -\frac 1 2 |\nabla \lambda(0) |^2,
\end{equation}
  and thus substituting into \eqref{eq:F_simp}, we find that 
  \begin{equation}
    F = -\frac 1 2 |\nabla \lambda(0) |^2 - \lambda(0)( u(0) - a ).
  \end{equation}

\end{proof}

\bibliographystyle{splncs}
\bibliography{scale_refs}

\end{document}